\newcommand{\tabincell}[2]{\begin{tabular}{@{}#1@{}}#2\end{tabular}}
\newcommand{\MyMapTemplatePrefixc}[4]{\expandafter#1\csname#3#4\endcsname{#2{#4}}} % it remembles a template: \#3#4 --> #2{#4}
\def\hu{\tilde{u}}
\def\ga{\alpha}
\def\gb{\beta}
\def\gep{\epsilon}
\def\ggm{\gamma}
\def\gl{\lambda}
\def\gt{\theta}
\def\gD{\Delta}
\def\gO{\Omega}
\def\hga{\hat{\alpha}}
\def\hgb{\hat{\beta}}
\def\hgl{\hat{\lambda}}
\def\bbN{{\mathbb N}}
\def\bbR{{\mathbb R}}
\def\st{\mbox{subject to}}
\newcommand{\grad}{\nabla}
\newcommand{\trans}{^{T}}
\newcommand{\inprod}[1]{\langle #1 \rangle}
\newcommand{\opt}{^{*}}
\newcommand{\itk}{_{k}}
\newcommand{\kp}{_{k+1}}
\newcommand{\km}{_{k-1}}
\newcommand{\reso}[1]{\gD #1 ^* }
\newcommand{\resp}[1]{\gD #1 ^+ }
\newcommand{\half}{\nicefrac{1}{2}}
\newtheorem{proposition}{Proposition}
\crefname{proposition}{Proposition}{Proposition}
\newtheorem{theorem}{Theorem}%[section]
\crefname{theorem}{Theorem}{Theorem}
\newtheorem{lemma}{Lemma}%[section]
\crefname{lemma}{Lemma}{Lemma}
\crefname{myrl}{Rule}{Rule}
\newtheorem{assumption}{Assumption}%[section]
\crefname{assumption}{Assumption}{Assumption}
\newcommand{\tom}[1]{}
\newcommand*{\affaddr}[1]{#1} % No op here. Customize it for different styles.
\newcommand*{\affmark}[1][*]{\textsuperscript{#1}}
\ifcvprfinal\pagestyle{empty}\fi
\begin{document}

\newcommand{\cs}[1]{}

%%%%%%%%% TITLE
\title{Adaptive Relaxed ADMM:  Convergence Theory and Practical Implementation}

%\author{Zheng Xu\footnote{{\tt\small xuzh@cs.umd.edu}}\\
%University of Maryland, College Park\\
%% For a paper whose authors are all at the same institution,
%% omit the following lines up until the closing ``}''.
%% Additional authors and addresses can be added with ``\and'',
%% just like the second author.
%% To save space, use either the email address or home page, not both
%\and
%M\'{a}rio A. T. Figueiredo\\
%Universidade de Lisboa, Portugal\\
%\and 
%Xiaoming Yuan\\
%Hong Kong Baptist University, Hong Kong
%\and 
%Tom Goldstein \\
%University of Maryland, College Park\\
%}
\author{
Zheng Xu\affmark[1]\thanks{{\tt\small xuzh@cs.umd.edu}}, \,\! M\'{a}rio A. T. Figueiredo\affmark[2],  Xiaoming Yuan\affmark[3],  Christoph Studer\affmark[4], and Tom Goldstein\affmark[1]\\
\affaddr{\affmark[1]Department of Computer Science, University of Maryland, College Park, MD}\\
\affaddr{\affmark[2]Instituto de Telecomunica\c{c}\~{o}es, Instituto Superior T\'{e}cnico, Universidade de Lisboa, Portugal}\\
\affaddr{\affmark[3]Department of Mathematics, Hong Kong Baptist University, Kowloon Tong, Hong Kong}\\
\affaddr{\affmark[4]School of Electrical and Computer Engineering, Cornell University, Ithaca, NY}\\
%\email{\affmark[1]\{xuzh,tomg\}@cs.umd.edu, \affmark[2]mario.figueiredo@tecnico.ulisboa.pt  \affmark[3]xmyuan@hkbu.edu.hk}
}

\maketitle
%\thispagestyle{empty}

%%%%%%%%% ABSTRACT
\begin{abstract}
Many modern computer vision and machine learning applications rely on solving difficult optimization problems that involve non-differentiable objective functions and constraints. The alternating direction method of multipliers (ADMM) is a widely used approach to solve such problems. Relaxed ADMM is a generalization of ADMM that often achieves better performance, but its efficiency depends strongly on algorithm parameters that must be chosen by an expert user. We propose an adaptive method that automatically tunes the key algorithm parameters to achieve optimal performance without user oversight. Inspired by recent work on adaptivity, the proposed \emph{adaptive relaxed ADMM} (ARADMM) is derived by assuming a Barzilai-Borwein style linear gradient. A detailed convergence analysis of ARADMM is provided, and numerical results on several applications demonstrate fast practical convergence.
\end{abstract}

%%%%%%%%% BODY TEXT

\section{Introduction}
Modern methods in computer vision and machine learning often require solving\cs{above you use complex; maybe stick to one term?} difficult optimization problems involving non-differentiable objective functions and constraints.
Some popular applications include sparse models \cite{wright2009robustface,yang2009linear,elhamifar2009sparse,mairal2014sparse}, low-rank models \cite{wright2009robust,harchaoui2012large,xu2015bmvc,li2017domain}, and support vector machines (SVMs) \cite{cortes1995support,chang2011libsvm}. The alternating direction method of multiplier (ADMM) is one of the most prominent optimization tools to solve such problems, and tackles problems in the following form:
\begin{eqnarray}
\min_{u\in \bbR^n,v\in \bbR^m}  h(u) + g(v),~~~~\st~~  Au+Bv = b. \label{eq:prob}
\end{eqnarray}
Here, $h:\bbR^n\rightarrow \bbR$ and $g:\bbR^m\rightarrow \bbR$ are closed, proper, and convex functions, $A\in \bbR^{p \times n}$, \mbox{$B\in \bbR^{p \times m}$}, and \mbox{$b \in \bbR^p$}.
ADMM was first introduced in \cite{glowinski1975approximation} and \cite{gabay1976dual}, and has found applications in a variety of optimization problems in machine learning, image processing, computer vision, wireless communications, and many other areas \cite{boyd2011admm,goldstein2014fast}.
%ADMM has become a state-of-the-art solvers in compressed sensing and distributed computing. See \cite{boyd2011admm} and \cite{goldstein2014fast} for a comprehensive review.

Relaxed ADMM is a popular  practical variant of ADMM,
%\footnote{See the public code of \cite{boyd2011admm} in \url{https://web.stanford.edu/~boyd/papers/admm/}.}
and proceeds with the following steps:
\begin{align}
u\kp & = \arg\min_{u} h(u) +  \frac{\tau\itk}{2} \left\| b-Au-Bv\itk + \frac{\gl\itk}{\tau\itk} \right\|^2\! \label{eq:updateu}\\
\hu\kp & = \ggm\itk A u\kp + (1-\ggm\itk) (b-Bv\itk)\label{eq:relax}\\
v\kp &= \arg\min_{v} g(v) + \frac{\tau\itk}{2} \left\| b-\hu\kp-Bv + \frac{\gl\itk}{\tau\itk}\right\|^2\! \label{eq:updatev}\\
\gl\kp &= \gl\itk +\tau\itk (b - \hu\kp -B v\kp). \label{eq:updatedual}
\end{align}
Here, $\gl_k \! \in \! \bbR^p$ denotes the dual variables (Lagrange multipliers) on iteration $k$, and $(\tau\itk, \ggm\itk)$ are sequences of penalty and relaxation parameters. Relaxed ADMM coincides with the original non-relaxed version if $\ggm\itk=1$.

%\zheng{I think we should talk about convergence rate, just to show we are aware of the research. And say no convergence rate has been proved for adaptive ADMM.}
%The O(1/k) convergence rate of ADMM is established under mild conditions for convex problems in \cite{he2015non}. Faster convergence rates are discussed in~\cite{goldfarb2013fast,goldstein2014fast,kadkhodaie2015accelerated}, which assume one of the functions is either strongly convex or smooth. \cite{tian2016faster} recently propose a non-increasing penalty parameter rule based on Lipschitz continuity constant for a slightly less general problem in the form $f(x) + g(Ax)$, which could achieve $O(1/k^2)$ convergence rate without assuming strong convexity.

%Convergence of non-relaxed ADMM is guaranteed under fairly general assumptions on $\tau_k$ and $\gamma_k,$ \cite{he2012con,goldfarb2013fast,kadkhodaie2015accelerated,he2015non}, and the convergence of relaxed ADMM has been proved under fairly weak conditions \cite{eckstein1992douglas,davis2014faster,nishihara2015general,fang2015generalized}, when the penalty and relaxation parameters are held constant.

Convergence of (relaxed) ADMM is guaranteed under fairly general assumptions  \cite{eckstein1992douglas,he2012con,he2015non,fang2015generalized}, if the penalty and relaxation parameters are held constant.
However, the practical performance of ADMM depends strongly on the choice of these parameters, as well as on the problem being solved. Good penalty choices are known for certain ADMM formulations, such as strictly convex quadratic problems~\cite{raghunathan2014alternating,ghadimi2015optimal}, and for the gradient descent parameter in the ``linearized'' ADMM \cite{lin2011linearized,liu2013linearized}.
  % \zheng{We should probably have more words on this linearized method.}

Adaptive penalty methods (in which the penalty parameters are tuned automatically as the algorithm proceeds) achieve good performance without user oversight.  For non-relaxed ADMM, the authors of \cite{he2000alternating} propose methods that modulate the penalty parameter so that the primal and dual residuals (i.e., derivatives of the Lagrangian with respect to primal and dual variables) are of approximately equal size. This ``residual balancing'' approach has been generalized to work with preconditioned variants of ADMM \cite{goldstein2015adaptive} and distributed ADMM \cite{song2015fast}.
 In \cite{xu2016adaptive}, a spectral penalty parameter method is proposed that uses the local curvature of the objective to achieve fast convergence.  All of these methods are specific to (non-relaxed) {\it vanilla} ADMM, and do not apply to the more general case involving a relaxation parameter.

%For the general relaxed ADMM formulation, convergence under adaptivity is not well understood. Convergence has been analyzed for constant penalty choices \cite{eckstein1992douglas,giselsson2016linear,davis2014faster,fang2015generalized},\zheng{We have already cited all those papers once, probably remove one to save space.} and it has been suggested by \cite{eckstein1992douglas} that over-relaxation ($\ggm\in(1,2)$) is possible to accelerate convergence and $\ggm=1.5$ achieves faster convergence in a specific distributed computing application. More recently, \cite{nishihara2015general} prove linear convergence under the dynamical system framework, when one of the objective terms is strongly convex. The authors suggest a grid search and semidefinite programming method to determine the optimal relaxation and penalty parameters.  In the specific case where the objective is quadratic, \cite{ghadimi2015optimal} propose a criteria for choosing the optimal parameters. These methods make strong assumptions about the objective and require specialized knowledge such as condition numbers.

\subsection{Overview \& contributions}

In this paper, we study adaptive parameter choices for the relaxed ADMM that jointly and automatically tune both the penalty parameter $\tau\itk$ and relaxation parameter $\ggm\itk$.  
%We study the problem from both a theoretical and practical perspectives. 
In Section \ref{sec:converge}, we address theoretical questions about the convergence of ADMM with non-constant penalty and relaxation parameters.  In Section \ref{sec:choice}, we discuss practical methods for choosing these parameters. In Section \ref{sec:app}, we apply the proposed ARADMM to several problems in machine learning, computer vision, and image processing.  Finally, in Section~\ref{sec:exp}, we compare ARADMM to other ADMM variants and examine the benefits of the proposed approach for real-world regression, classification, and image processing problems. 
%
%Our proof is inspired by variational inequality approach in \cite{he2000alternating,he2015non}, and has the potential to be extended to other variants of ADMM when adaptively choose both $\tau\itk$ and $\ggm\itk$.

\section{Related work}
%We briefly discuss some related work in this section.
Sparse and low rank methods are widely used in computer vision \cite{wright2009robustface,yang2009linear,elhamifar2009sparse,wright2009robust,harchaoui2012large,mairal2014sparse,xu2015bmvc,li2017domain}, machine learning \cite{efron2004least,zou2005regularization,schmidt2007fast,fan2008liblinear,liu2009large}, and image processing \cite{rudin1992nonlinear,goldstein2014fast}. ADMM has been extensively applied to solve such problems \cite{boyd2011admm,goldstein2014fast,xu2016adaptive,xu2016empirical},
% Moreover, ADMM is one of the state of the art method for distributed computing \cite{boyd2011admm,goldstein2016unwrapping}.
and has recently found applications in neural networks \cite{zhang2016efficient,taylor2016training}, tensor decomposition~\cite{goldfarb2014robust,lu2016tensor,xu2016non}, structure from motion \cite{goldstein2016shapefit}, and other vision problems.

The $O(1/k)$ convergence rate of non-relaxed ADMM is established under mild conditions for convex problems~\cite{he2012con,he2015non}. The  $O(1/k^2)$ convergence rate is discussed in \cite{goldfarb2013fast,goldstein2014fast,kadkhodaie2015accelerated,tian2016faster}, where at least one of the functions is assumed either strongly convex or smooth. 
%\cite{tian2016faster} recently propose a non-increasing penalty parameter rule based on Lipschitz continuity constant for a slightly less general problem in the form $f(x) + g(Ax)$, which could achieve $O(1/k^2)$ convergence rate without assuming strong convexity.
For the general relaxed ADMM formulation, a $O(1/k)$  convergence rate is provided under mild conditions \cite{fang2015generalized}. Linear convergence can be achieved with strong convexity assumptions~\cite{davis2014faster,nishihara2015general,giselsson2016linear}.  
All of these results assume constant parameters---it is considerably harder to prove convergence when the algorithm parameters are adaptive.
%and to the best of our knowledge there are no known convergence rates in this case. 

Fixed optimal parameters are discussed in the literature. For the specific case in which the objective is quadratic, a criterion is proposed in \cite{raghunathan2014alternating,ghadimi2015optimal}. The authors of \cite{nishihara2015general}  suggest a grid search and semidefinite programming based method to determine the optimal relaxation and penalty parameters.  These methods, however, make strong assumptions about the objective and require knowledge of condition numbers. 

Adaptive penalty methods are proposed to accelerate the practical convergence of non-relaxed ADMM \cite{he2000alternating,xu2016adaptive}. For the relaxation parameter,  it has been suggested in \cite{eckstein1992douglas} that over-relaxation ($\ggm\in(1,2)$) may accelerate convergence and $\ggm=1.5$ achieves faster convergence in a specific distributed computing application. The proposed ARADMM simultaneously adapts both the penalty and the relaxation parameter, thus being fully automated. 

%%%%%%%%%%%  CONVERGENCE

\section{Convergence theory}
\label{sec:converge}
We study conditions under which ADMM converges with adaptive penalty and relaxation parameters.   Our approach utilizes the variational inequality (VI) methods put forward in  \cite{he2000alternating,he2012con,he2015non}.  %In Section \ref{sec:choice}, we will turn to the question of practical methods for choosing penalty and relaxation parameters.
Our results measure convergence using the primal and dual ``residuals," which are defined as
\begin{align}\label{eq:res}
r\itk = b - A u\itk - B v\itk \ \text{and}\ d\itk = \tau\itk A^TB(v\itk-v\km).
\end{align}
It has been observed that these residuals approach zero as the algorithm approaches a true solution \cite{boyd2011admm}.
Typically, the iterative process is stopped if
\begin{equation}\label{eq:stop}
\begin{split}
& \|r_k\|  \leq \gep^{tol} \max\{\|Au_k\|, \|Bv_k\|, \|b\| \}  \\
& \text{and} ~~\|d_k\| \leq \gep^{tol} \| A^{T} \gl_k\|,
\end{split}
\end{equation}
where $\gep^{tol} > 0$ is the stopping tolerance \cite{boyd2011admm}.  
For this reason, it is important to know that the method converges in the sense that the residuals approach zero as $k\to\infty.$
%
%We now prove the convergence of ARADMM in the residual sense, i.e.,
%\begin{align}
%\lim_{k\rightarrow \infty} \| r\itk \| = 0 \quad \text{and} \quad \lim_{k\rightarrow \infty} \| d\itk \| = 0.
%\end{align}
%Our proof strategy leverages \cref{lm3} to  contraction type proof of \cref{thmas1} and \cref{thmas2}, which are our main theorem for adaptive relaxed ADMM.
%We prove  \cref{thmas1} here and \cref{thmas1} can be similarly proved, which is provided in the supplemental material.

In the sequel, we prove that relaxed ADMM converges in the residual sense, provided that the algorithm parameters satisfy one of the following two assumptions.
\begin{assumption}\label{as1}
The relaxation sequence $\ggm\itk$ and penalty sequence $\tau\itk$ satisfy
%\begin{align}
\begin{equation}
\begin{split}
& 1 \leq \ggm\itk < 2, \, \lim_{k\rightarrow \infty} 1/\tau\itk^2 < \infty, \,
%& \forall k,\, 1/\tau\itk^2 < \infty, \quad
\sum_{k=1}^{\infty} \eta\itk^2 < \infty,
\\
& \quad \text{where} \quad \eta\itk^2 =
%\max\left(\frac{\ggm\itk\tau\itk^2}{(2-\ggm\itk)\tau\km^2}-1, \, \frac{\ggm\itk}{(2-\ggm\itk)}\right)
\frac{\ggm\itk}{(2-\ggm\itk)} \max\left(\tau\itk^2/\tau\km^2, \, 1\right)-1.
\end{split}
\end{equation}
%\end{align}
\end{assumption}

\begin{assumption}\label{as2}
The relaxation sequence $\ggm\itk$ and penalty sequence $\tau\itk$ satisfy
%\begin{align}
\begin{equation}
\begin{split}
& 1 \leq \ggm\itk < 2, \, \lim_{k\rightarrow \infty}\tau\itk^2 < \infty, \,
%& \forall k,\, \tau\itk^2 < \infty, \quad
\sum_{k=1}^{\infty} \gt\itk^2 < \infty,
 \\
& \quad\text{where}\quad \gt\itk^2 =
\frac{\ggm\itk}{(2-\ggm\itk)} \max\left(\tau\km^2/\tau\itk^2, \, 1\right)-1.
\end{split}
\end{equation}
%\end{align}
\end{assumption}

In Section \ref{sec:proofs}, we prove adaptive relaxed ADMM converges if the algorithm parameters satisfy either Assumption \ref{as1} or Assumption \ref{as2}.  Before presenting the proof, we show how to choose the relaxation parameters that lead to efficient performance in practice.  

%\subsection{O(1/k) convergence?}
%Feels possible, but currently do not know how to prove

%%%%%%%%%%%%%%%%%   FORMULATION

%%%%%%%%%%%%     FORMULATION

\section{ARADMM: Adaptive relaxed ADMM}\label{sec:choice}
Spectral stepsize selection methods for vanilla ADMM were discussed in \cite{xu2016adaptive}.  Here, we modify the adaptive ADMM framework in two important ways.  First, we discuss the selection of penalty parameters in the presence of the relaxation term.  Second, we discuss adaptive methods also for automatically selecting the relaxation parameter.

The proposed method works by assuming a local linear model for the dual optimization problem, and then selecting an optimal stepsize under this assumption.  A safeguarding method is adopted to ensure that bad stepsizes are not chosen in case these linearity assumptions fail to hold.

\subsection{Dual interpretation of relaxed ADMM}
\label{sec:ADMM_DRS}
We derive our adaptive stepsize rules by examining the close relationship between relaxed ADMM and the relaxed Douglas-Rachford Splitting (DRS) \cite{eckstein1992douglas,davis2014faster,giselsson2016linear}. The dual of the general constrained problem~\eqref{eq:prob} is
\begin{equation}
\min_{\zeta\in \bbR^p}  \underbrace{h^*(A^{T}\zeta) - \inprod{\zeta, b}}_{ \hat{h}(\zeta)} + \underbrace{g^*(B^{T}\zeta)}_{\hat{g}(\zeta)},
\label{eq:def_Hhat_Ghat}
\end{equation}
with $f^*$ denoting the Fenchel conjugate of $f$, defined as  $f^*(y) = \sup_{x} \langle x, y\rangle - f(x)$ \cite{Rockafellar}.

The relaxed DRS algorithm solves \eqref{eq:def_Hhat_Ghat} by generating two sequences, $(\zeta\itk)_{k\in\bbN} $ and $(\hat \zeta\itk)_{k\in\bbN},$ according to
{\small
\begin{align}
0  \in & \frac{\hat{\zeta}\kp-\zeta\itk}{\tau\itk}  + \partial \hat h(\hat{\zeta}\kp) +  \partial \hat g(\zeta\itk), \label{eq:dr1}\\
0  \in & \frac{{\zeta}\kp-\zeta\itk}{\tau\itk} + \ggm\itk\, \partial \hat h(\hat{\zeta }\kp)
\nonumber \\ & \quad
- (1-\ggm\itk)\partial \hat g(\zeta\itk) + \partial \hat g(\zeta\kp), \label{eq:dr2}
\end{align}
}%
where $\ggm\itk$ is a relaxation parameter, and $\partial f(x)$ denotes the subdifferential of $f$ evaluated at $x$ \cite{Rockafellar}.
Referring back to ADMM in \eqref{eq:updateu}--\eqref{eq:updatedual}, and defining  $\hat{\gl}\kp = \gl\itk  +\tau\itk (b - A u\kp -B v\itk)$, the sequences $(\lambda\itk)_{k\in\bbN} $ and $(\hat \lambda\itk)_{k\in\bbN}$ satisfy  the same conditions  \eqref{eq:dr1} and \eqref{eq:dr2} as $(\zeta\itk)_{k\in\bbN} $ and $(\hat \zeta\itk)_{k\in\bbN}$, thus ADMM  for the problem~\eqref{eq:prob} is equivalent to DRS on its dual \eqref{eq:def_Hhat_Ghat}. A detailed proof of this is provided in the supplementary material.

\subsection{Spectral adaptive stepsize rule}
\label{sec:dr2curv}
Adaptive stepsize rules of the ``spectral'' type were originally proposed for simple gradient descent on smooth problems by Barzilai and Borwein \cite{barzilai1988two}, and have been found to dramatically outperform constant stepsizes in many applications \cite{fletcher2005barzilai,wright2009sparse}.  Spectral stepsize methods work by modeling the gradient of the objective as a linear function, and then selecting the optimal stepsize for this simplified linear model.

Spectral methods were recently used to determine the penalty parameter for the non-relaxed ADMM in \cite{xu2016adaptive}.
Inspired by that work, we derive spectral stepsize rules assuming a linear model/approximation for $\partial\hat h(\hat \zeta)$ and  $\partial \hat g(\zeta)$ at iteration $k$ given by
\begin{equation}
\partial \hat h(\hat \zeta) = \alpha\itk \, \hat \zeta + \Psi\itk ~~~~~\text{and}~~~~~\partial \hat g(\zeta) = \beta\itk \, \zeta + \Phi\itk, \label{eq:linearb2}
\end{equation}
where  $\ga\itk > 0$, $\gb\itk >0$ are local curvature estimates of $\hat{h}$ and $\hat{g}$, respectively, and $\Psi\itk, \Phi\itk \subset \bbR^p$.  Once we obtain these curvature estimates, we will exploit the following simple proposition whose proof is given in the supplementary material.

\begin{proposition} \label{prop}
Suppose the DRS steps \eqref{eq:dr1}--\eqref{eq:dr2} are applied to  problem \eqref{eq:def_Hhat_Ghat}, where (omitting iteration $k$ from $\alpha\itk, \beta\itk, \Psi\itk, \Phi\itk$ to lighten the notation in what follows)
\begin{equation}
\partial \hat h(\hat \zeta) = \alpha \, \hat \zeta + \Psi ~~~~~\text{and}~~~~~\partial \hat g(\zeta) = \beta \, \zeta + \Phi . \label{eq:linearb}
\end{equation}
Then, the residual of $\, \hat{h}(\zeta\kp) + \hat{g}(\zeta\kp)$ will be zero if $\tau$ and $\ggm$ are chosen to satisfiy
% $ \left| 1- \frac{\ggm\tau(\ga+\gb)}{(1+\alpha\, \tau)(1+\beta\, \tau)} \right| =0.$
 $\ggm\itk = 1 + \frac{1+\ga\gb\tau_k^2}{(\ga+\gb)\tau_k}.$
\label{rl:spectral}
\end{proposition}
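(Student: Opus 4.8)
\emph{Proof sketch (proposed approach).} The plan is to exploit that, under the affine model \eqref{eq:linearb}, the set-valued maps $\partial\hat h$ and $\partial\hat g$ become single-valued affine maps, so that one full sweep of the relaxed DRS recursion \eqref{eq:dr1}--\eqref{eq:dr2} is an \emph{affine} map $\zeta\itk\mapsto\zeta\kp$. The residual of $\hat h+\hat g$ at $\zeta\kp$ is then exactly $\|(\ga+\gb)\zeta\kp+(\Psi+\Phi)\|$, which vanishes iff $\zeta\kp=\zeta\opt:=-(\Psi+\Phi)/(\ga+\gb)$, the unique minimizer of the (now quadratic) dual. So the whole statement reduces to picking $(\tau\itk,\ggm\itk)$ so that this single affine step lands exactly on $\zeta\opt$.

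First I would eliminate $\hat\zeta\kp$: inserting \eqref{eq:linearb} into \eqref{eq:dr1} turns it into a single linear equation in $\hat\zeta\kp$, which -- since $\ga,\gb,\tau\itk>0$ keep all denominators away from zero -- solves to $\hat\zeta\kp=\big[(1-\gb\tau\itk)\zeta\itk-\tau\itk(\Psi+\Phi)\big]/(1+\ga\tau\itk)$. Substituting this, together with the affine model, into \eqref{eq:dr2} and solving the resulting linear equation for $\zeta\kp$ gives an affine relation $\zeta\kp=M(\tau\itk,\ggm\itk)\,\zeta\itk+c(\tau\itk,\ggm\itk)$ with scalar $M$ and vector $c$.

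Next I would impose $M(\tau\itk,\ggm\itk)=0$. Since $\zeta\opt$ is a fixed point of the DRS iteration for \emph{every} admissible parameter pair (check: $\hat\zeta\kp=\zeta\opt$ and then $\zeta\kp=\zeta\opt$ satisfy \eqref{eq:dr1}--\eqref{eq:dr2} by $\partial\hat h(\zeta\opt)=-\partial\hat g(\zeta\opt)$), the constant term necessarily satisfies $c=(1-M)\zeta\opt$; hence $M=0$ forces $c=\zeta\opt$, so $\zeta\kp=\zeta\opt$ and the residual is zero, as claimed. It then only remains to read off $M$ (the coefficient of $\zeta\itk$ from the previous step), clear the denominator $1+\ga\tau\itk$, and collect powers of $\tau\itk$: the $\tau\itk^2$-terms combine to $\ga\gb\tau\itk^2$ and the $\tau\itk$-terms to $(1-\ggm\itk)(\ga+\gb)\tau\itk$, so $M=0$ collapses to $1+(1-\ggm\itk)(\ga+\gb)\tau\itk+\ga\gb\tau\itk^2=0$. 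Solving for $\ggm\itk$ yields $\ggm\itk=1+\frac{1+\ga\gb\tau\itk^2}{(\ga+\gb)\tau\itk}$, which is the stated rule.

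The only genuine work -- the hard part, such as it is -- is the bookkeeping in the elimination step: substituting the nested fraction for $\hat\zeta\kp$ into \eqref{eq:dr2}, solving for $\zeta\kp$, and verifying that the coefficient of $\zeta\itk$ simplifies exactly as above. This is routine scalar algebra, but it is the natural place for a sign or cancellation slip; everything else is immediate. (One may note afterwards that $\ga,\gb,\tau\itk>0$ make the added fraction positive, so the rule always selects $\ggm\itk>1$, i.e.\ over-relaxation; the safeguarding step of Section~\ref{sec:choice} then keeps $\ggm\itk$ in the admissible range.)
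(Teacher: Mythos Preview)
Your approach is correct and is essentially the natural direct computation: plug the affine model into \eqref{eq:dr1}--\eqref{eq:dr2}, eliminate $\hat\zeta\kp$, read off the scalar contraction factor $M(\tau\itk,\ggm\itk)$, and set it to zero. The paper defers its proof to the supplementary material, but the computation there follows the same substitution-and-solve route; your fixed-point observation that $c=(1-M)\zeta\opt$ is a clean way to avoid separately verifying the constant term, and your final identity $N(1+\ga\tau\itk)=1+(1-\ggm\itk)(\ga+\gb)\tau\itk+\ga\gb\tau\itk^2$ is exactly right.
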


Our adaptive method works by fitting a linear model to the
gradient (or subgradient) of our objective, and then using Proposition \ref{prop} to select an optimal stepsize pair that obtains zero residual on the model problem.   For our convergence theory to hold, we need $\ggm<2.$  For fixed values of $\alpha$ and $\beta,$ the minimal value of $\ggm\itk$ that is still optimal for the linear model occurs if we choose
 \begin{align}
\tau\itk = \arg\min_{\tau} \frac{1+\ga\gb\tau^2}{(\ga+\gb)\tau}
 =  1/\sqrt{\ga\gb}. \label{eq:opttau}
\end{align}
Note this is the same ``optimal'' penalty parameter proposed for non-relaxed ADMM in \cite{xu2016adaptive}.  Under this choice of $\tau_k,$ we then have the ``optimal'' relaxation parameter
\begin{align}
\ggm\itk = 1 + \frac{1+\ga\gb\tau^2}{(\ga+\gb)\tau} = 1 + \frac{2\sqrt{\ga\gb}}{\ga + \gb} \le 2. \label{eq:optggm}
\end{align}

%Notice that $\tau\itk = (\hat\alpha\itk\, \hat \beta\itk)^{1/2}$ and $\ggm\itk = 1 + \frac{2\sqrt{\hat\ga\itk\hat\gb\itk}}{\hat\ga\itk + \hat\gb\itk}$, where $\hat{\alpha}\itk =1/\alpha\itk$ and $\hat{\beta}\itk = 1/\beta\itk$ are the spectral gradient descent stepsizes for $\hat h$ and $\hat g$, at  $\hat\zeta\itk$ and $\zeta\itk$, respectively. Also notice that the optimal stepsize $\tau\itk$ coincide with the spectral stepsize in \cite{xu2016adaptive}.

\subsection{Estimation of stepsizes}
\label{sec:curvestim}
We now propose a simple method for fitting a linear model to the dual objective terms so that the formulas in Section \ref{sec:dr2curv} can be used to obtain stepsizes.  Once these linear models are formed, the optimal penalty parameter and relaxation term can be calculated by \eqref{eq:opttau} and \eqref{eq:optggm}, thanks to the equivalence of relaxed ADMM and DRS.

In what follows, we let  $\hat \ga\itk = 1/\alpha\itk$ and $\hat \gb\itk = 1/\beta\itk$  to simplify notation. The optimal stepsize choice is then written as $\tau\itk = (\hat\alpha\itk\, \hat \beta\itk)^{1/2}$ and $\ggm\itk = 1 + \frac{2\sqrt{\hat\ga\itk\hat\gb\itk}}{\hat\ga\itk + \hat\gb\itk}$.

 The estimation of $\hat \ga\itk$ and $\hat \gb\itk$ for the dual components $\hat{h}(\hat\lambda\itk)$ and $\hat{g}(\lambda\itk)$ at the $k$-th iteration of primal ADMM has been described in \cite{xu2016adaptive}.  It is easy to verify that the model parameters $\hat \ga\itk$ and $\hat \gb\itk$ of relaxed ADMM can be estimated based on the results from iteration $k$ and an older iteration $k_0<k$ in a similar way.
If we define
\begin{align} \label{eq:du}
\gD \hat{\gl}\itk &:= \hat{\gl}\itk - \hat{\gl}_{k_0}  \quad \text{and} \quad \gD \hat h\itk :
= A(u\itk-u_{k_0}),
\end{align}
then the parameter $\hat \ga\itk$ is obtained from the formula
\begin{align}
& \hat{\ga}\itk =
\begin{cases}
\hat{\ga}\itk^{\mbox{\scriptsize MG}}&~~\text{if}~~2 \,\hat{\ga}\itk^{\mbox{\scriptsize MG}} > \hat{\ga}\itk^{\mbox{\scriptsize SD}} \\
\hat{\ga}\itk^{\mbox{\scriptsize SD}} - \hat{\ga}\itk^{\mbox{\scriptsize MG}} /2  &~~\text{otherwise,}
\end{cases}\label{eq:alpha} \\
& \hga\itk^{\mbox{\scriptsize SD}} = \frac{\inprod{\gD \hat{\gl}\itk, \gD \hat{\gl}\itk}}{\inprod{\gD \hat h\itk, \gD \hat{\gl}\itk}}
\,\, \text{ and }\,\,
\hga\itk^{\mbox{\scriptsize MG}} = \frac{\inprod{\gD \hat h\itk, \gD \hat{\gl}\itk}}{\inprod{\gD \hat h\itk, \gD \hat h\itk}}.
\end{align}
For a detailed derivation of these formulas, see \cite{xu2016adaptive}.

The spectral stepsize $\hat{\gb}\itk$ of $\hat{g}(\gl\itk)$ is similarly estimated with $ \gD \hat g\itk \! := \! B(v\itk-v_{k_0})$, and $\gD \gl\itk \! := \! \gl\itk -\gl_{k_0}$.
It is important to note that  $\hat{\ga}\itk $ and $\hat{\gb}\itk$ are obtained from the iterates of ADMM alone, i.e., our scheme does not require the user to supply the dual problem.

\subsection{Safeguarding} \tom{This is WORDY, kill for lenght}
Spectral stepsize methods for simple gradient descent are paired with a backtracking line search to guarantee convergence in case the linear model assumptions break down and an unstable stepsize is produced.  ADMM methods have no analog of backtracking.  Rather, we adopt the correlation criterion proposed in \cite{xu2016adaptive} to test the validity of the local linear assumption, and only rely on the adaptive model when the assumptions are deemed valid.  To this end, we define
\begin{align}
\ga^{\mbox{\scriptsize cor}}\itk = \frac{\inprod{\gD \hat h\itk, \gD \hat{\gl}\itk}}{ \| \gD \hat h\itk\| \, \| \gD \hat{\gl}\itk\| } \ \, \text{and} \, \
\gb^{\mbox{\scriptsize cor}}\itk = \frac{\inprod{\gD \hat g\itk, \gD \gl\itk}}{ \| \gD \hat g\itk\| \, \| \gD \gl\itk\| }. \label{eq:corr}
\end{align}
When the model assumptions \eqref{eq:linearb} hold perfectly, the vectors $\gD \hat h\itk$ and $\gD \hat{\gl}\itk$ should be highly correlated and we get $\ga^{\mbox{\scriptsize cor}}\itk = 1.$ When $\ga^{\mbox{\scriptsize cor}}\itk$ or $\gb^{\mbox{\scriptsize cor}}\itk$ is small, the model assumptions are invalid and the spectral stepsize may not be effective.

The proposed method uses the following update rules
\begin{equation} \small
\hspace{-2mm}
\tau\kp =
\begin{cases}
\sqrt{\hat{\ga}\itk \hat{\gb}\itk} &~~\text{if}~~ \ga^{\mbox{\scriptsize cor}}\itk > \gep^{\mbox{\scriptsize cor}}~~\text{and}~~\gb^{\mbox{\scriptsize cor}}\itk > \gep^{\mbox{\scriptsize cor}}\\
\hat{\ga}\itk &~~\text{if}~~ \ga^{\mbox{\scriptsize cor}}\itk > \gep^{\mbox{\scriptsize cor}}~~\text{and}~~\gb^{\mbox{\scriptsize cor}}\itk \leq \gep^{\mbox{\scriptsize cor}}\\
\hat{\gb}\itk &~~\text{if}~~ \ga^{\mbox{\scriptsize cor}}\itk \leq \gep^{\mbox{\scriptsize cor}}~~\text{and}~~\gb^{\mbox{\scriptsize cor}}\itk > \gep^{\mbox{\scriptsize cor}}\\
\tau\itk &~~\text{otherwise},
\end{cases} \label{eq:final}
\end{equation}
%{\small
\begin{equation} \small
\hspace{-1mm}
\ggm\kp =
\begin{cases}
1 + \frac{2\sqrt{\hat\ga\itk\hat\gb\itk}}{\hat\ga\itk + \hat\gb\itk}
 &~\text{if}~ \ga^{\mbox{\scriptsize cor}}\itk > \gep^{\mbox{\scriptsize cor}}~\text{and}~\gb^{\mbox{\scriptsize cor}}\itk > \gep^{\mbox{\scriptsize cor}}\\
 1.9 &~\text{if}~ \ga^{\mbox{\scriptsize cor}}\itk > \gep^{\mbox{\scriptsize cor}}~~\text{and}~~\gb^{\mbox{\scriptsize cor}}\itk \leq \gep^{\mbox{\scriptsize cor}}\\
 1.1 &~\text{if}~ \ga^{\mbox{\scriptsize cor}}\itk \leq \gep^{\mbox{\scriptsize cor}}~~\text{and}~~\gb^{\mbox{\scriptsize cor}}\itk > \gep^{\mbox{\scriptsize cor}}\\
 1.5 &~\text{otherwise},
\end{cases} \label{eq:final2}
\end{equation}
%}%
where $\gep^{\mbox{\scriptsize cor}}$ is a quality threshold for the curvature estimates, while $\hat{\ga}\itk$ and $\hat{\gb}\itk$ are the spectral stepsizes estimated in \cref{sec:curvestim}. The update for $\tau\kp$ only uses model parameters that have been accurately estimated.  When the model is effective for $h$ but not $g,$ we use a large $\ggm\itk=1.9$ to make the~$v$ update conservative relative to the~$u$ update.  When the model is effective for $g$ but not $h,$ we use a small $\ggm\itk=1.1$ to make the $v$ update aggressive relative to the $u$ update.

\subsection{Applying convergence guarantee}

%A wide class of stepsize sequences satisfy \cref{as1} or \cref{as2}, and we have found that convergence is very reliable even when these conditions are not explicitly enforced.  Nonetheless, it is interesting to consider implementations to

Our convergence theory requires either \cref{as1} or \cref{as2} to be satisfied, which suggests that convergence is guaranteed under ``bounded adaptivity'' for both penalty and relaxation parameters. These conditions can be guaranteed by explicitly adding constraints to the stepsize choice in ARADMM. 

%Note that we can construct a simple sequence of parameters that satisfies the assumptions by taking $\tau_k = (1+\nicefrac{C_{cg}}{k^2}), \ggm_k = 1+\nicefrac{C_{cg}}{k^2}$, where $C_{cg}$ is a constant.  
To guarantee convergence, we simply replace the parameter updates \eqref{eq:final} and \eqref{eq:final2} with
%\begin{align}
\begin{equation}
\begin{split}
\hat\tau\kp = & \min\left\{\tau\kp , \, \left(1+\nicefrac{C_{cg}}{k^2}\right) \tau\itk\right\} \\
\hat\ggm\kp = & \min\left\{\ggm\kp , \, 1+\nicefrac{C_{cg}}{k^2}\right\}\!, \label{eq:cgcon}
\end{split}
\end{equation}
where $C_{cg}$ is some (large) constant.
%\end{align}
It is easily verified that the parameter sequence $( \hat\tau\itk, \hat\ggm\itk )$ satisfies \cref{as1}. In practice, the update schemes \eqref{eq:final} and \eqref{eq:final2} converges reliably without explicitly enforcing these conditions. We use a very large $C_{cg}$ such that the conditions are not triggered in the first few thousand iterations and provide these constraints for theoretical interests.

\setlength{\textfloatsep}{7pt}
\begin{algorithm}[tbp]
\caption{Adaptive relaxed ADMM (ARADMM)}
\label{alg}
\begin{algorithmic}[1]
\REQUIRE  initialize $v_0$, $\gl_0$, $\tau_0$, $\ggm_0$, and $k_0 \! = \! 0$
\WHILE{not converge by \eqref{eq:stop} \textbf{and} $k <\text{maxiter}$}
%\STATE $u\kp \gets \arg\min_{u} H(u) + \langle \gl\itk, -Au \rangle + \frac{\tau\itk}{2} \| b-Au-Bv\itk\|_2^2$
%\STATE $v\kp \gets \arg\min_{v} G(v) + \langle \gl\itk, -Bv \rangle + \frac{\tau\itk}{2} \| b-Au\kp-Bv\|_2^2 $
%\STATE $\gl\kp \gets \gl\itk +\tau\itk (b - A u\kp -B v\kp)$
\STATE Perform relaxed ADMM, as in \eqref{eq:updateu}--\eqref{eq:updatedual}
\IF{$\text{mod}(k, T_{f}) = 1$}
\STATE $\hgl\kp = \gl\itk +\tau\itk (b - A u\kp -B v\itk)$
\STATE Compute spectral stepsizes $\hga\itk,\hgb\itk$ using~\eqref{eq:alpha}
\STATE Estimate correlations $\ga\itk^{\mbox{\scriptsize cor}},\gb\itk^{\mbox{\scriptsize cor}}$ using \eqref{eq:corr}
\STATE Update $\tau\kp,\ggm\kp$ using \eqref{eq:final} and \eqref{eq:final2}
\STATE Bound $\tau\kp,\ggm\kp$ using \eqref{eq:cgcon}
\STATE  $k_0 \gets k$
\ELSE
\STATE $\tau\kp \gets \tau\itk$ and $\ggm\kp \gets \ggm\itk$
\ENDIF
\STATE $k \gets k+1$
\ENDWHILE
\end{algorithmic}

\end{algorithm}

\subsection{ARADMM algorithm}
The complete \textit{adaptive relaxed ADMM} (ARADMM) is shown in \cref{alg}. We suggest only updating the stepsize every $T_f = 2$ iterations. We suggest a fixed safeguarding threshold $\gep^{\mbox{\scriptsize cor}}=0.2,$ which is used in all the experiments in Section \ref{sec:app}.  The overhead of the adaptive scheme is modest, requiring only a few inner product calculations.
%Convergence can be guaranteed by turning off adaptivity of $\tau\itk$ the setting the relaxation term to $\ggm\itk=1$ after a finite number of iterations so that theoretical assumptions are satisfied. This is a conservative precaution; we have found this to be unnecessary in practice.

%%%%%%%%%%%%%%% Theory
\section{Proofs of convergence theorems} \label{sec:proofs}
We now prove that relaxed ADMM converges under Assumption \ref{as1} or \ref{as2}.
Let
\begin{align}
y =
\begin{pmatrix}
u \\
v \\
\end{pmatrix} \in \bbR^{n + m}, \
z=
\begin{pmatrix}
u \\
v \\
\gl\\
\end{pmatrix} \in \bbR^{n +m +p}. \label{eq:defyz}
\end{align}
We use $y\itk = (u\itk, v\itk)\trans$ and $z\itk = (u\itk, v\itk, \gl\itk)\trans$
%and $\tz\itk = (u\itk, v\itk, \tgl\itk)\trans$
to denote iterates, and $y\opt = (u\opt, v\opt)\trans$ and $z\opt = (u\opt, v\opt, \gl\opt)\trans$ denote optimal solutions. Set $\resp{z}\itk = (\resp{u}\itk, \resp{v}\itk, \resp{\gl}\itk):=z\kp-z\itk$, %$\resp{\tz}\itk = (\resp{u}\itk, \resp{v}\itk, \resp{\tgl}\itk):=\tz\kp-z\itk$,
and $\reso{z}\itk = (\reso{u}\itk, \reso{v}\itk, \reso{\gl}\itk):=z\opt-z\itk$,
%and $\reso{\tz}\itk = (\reso{u}\itk, \reso{v}\itk, \reso{\tgl}\itk):=z\opt-\tz\itk$, and combine \eqref{eq:updatedual} and \eqref{eq:deftgl} to get
%\begin{align}
%\resp{\gl}\itk = \ggm\itk\resp{\tgl}\itk + (\ggm\itk-1)\tau\itk B\resp{v}\itk \label{eq:dglr}.
%\end{align}
and define
\begin{align}
f(y) = h(u) + g(v),  \quad
F(z) = \begin{pmatrix}
-A\trans \gl\\
 -B\trans \gl\\
  A u + B v - b\\
  \end{pmatrix}\!. \label{eq:deffF}
\end{align}
Notice that $F(z)$ is monotone, which means $\forall z, z', (z-z')\trans(F(z)-F(z')) \geq 0 $.
%The semi-norm is defined with semidefinite matrix $S \succeq 0$ as $\| z \|^2_S = z\trans S z$.
%%\begin{align}
%%\| z - \tz \|^2_S = (z - \tz)\trans S (z - \tz) .
%%\end{align}
%For convenience, we also define
%\begin{align}
%&R\itk =
%\begin{pmatrix}
%0 & 0 & 0\\
%0 & B\trans B & 0 \\
%0 & 0 & 1/\tau\itk^2 I
%\end{pmatrix}\nonumber
%\quad\text{and}\quad\\
%&T\itk =
%\begin{pmatrix}
%0 & 0 & 0\\
%0 & \tau\itk^2 B\trans B & 0 \\
%0 & 0 & I
%\end{pmatrix},
%\end{align}
%where $I \in \bbR^{p\times p}$ represents the identity matrix.

Problem formulation \eqref{eq:prob} can be reformulated as a variational inequality (VI).  The optimal solution $ z\opt$ satisfies
\begin{align}
 \forall z, \,\,\, f(y)-f(y\opt) + (z-z\opt)\trans F(z\opt) \geq 0. \label{eq:optvi}
\end{align}
Likewise, the ADMM iterates produced by steps \eqref{eq:updateu} and \eqref{eq:updatev} satisfy the variational inequalities
\begin{align}
\forall u, \,\,\,  & h(u) - h(u\kp) +(u-u\kp)\trans \nonumber\\
& (\tau\itk A \trans (Au\kp + Bv\itk -b) - A\trans \gl\itk) \geq 0,  \label{eq:hvi} \\
\forall v, \,\,\,  & g(v) - g(v\kp) + (v-v\kp)\trans \nonumber\\
&(\tau\itk B \trans (\hu\kp + Bv\kp -b) - B\trans \gl\itk) \geq 0.  \label{eq:gvi}
\end{align}
Using the definitions of $y$, $z$, $f(y)$, and $F(z)$ in (\ref{eq:defyz}, \ref{eq:deffF}), $\gl$ in \eqref{eq:updatedual},
and $\hu$ in \eqref{eq:relax},
VI \eqref{eq:hvi} and \eqref{eq:gvi} combine to yield
{\small
\begin{align}
& f(y)-f(y\kp) + (z-z\kp)\trans \left( F(z\kp) + \gO(\resp{z}\itk, \tau\itk, \ggm\itk)\right) \geq 0, \nonumber \\
& \gO(\resp{z}\itk, \tau\itk, \ggm\itk) = \begin{pmatrix}
\frac{\ggm\itk-1}{\ggm\itk}A\trans \resp{\gl}\itk-\frac{\tau\itk}{\ggm\itk}A\trans B\resp{v}\itk \\
%\frac{1}{\ggm\itk}A\trans ((\ggm\itk-1)\gD z\itk- \tau\itk B\gD v\itk) \\
0 \\
\frac{1}{\ggm\itk\tau\itk} \resp{\gl}\itk - \frac{\ggm\itk-1}{\ggm\itk}B\resp{v}\itk
\end{pmatrix}\!. \label{eq:kpvi}
\end{align}
}%
%{\scriptsize
%\begin{align}
%& f(y)-f(y\kp) + (z-\tz\kp)\trans \left( F(\tz\kp)  + \gO(\resp{\tz}\itk, \tau\itk, \ggm\itk)\right) \geq 0, \nonumber\\
%& \quad \text{where }\gO(\resp{\tz}\itk, \tau\itk, \ggm\itk) =
%\begin{pmatrix}
%%(
%-\tau\itk A\trans B\resp{v}\itk \\
%-(\ggm\itk-1)B\trans(\resp{\tgl}\itk + \tau\itk B \resp{v}\itk) \\
%\frac{1}{\tau\itk} \resp{\tgl}\itk
%\end{pmatrix}
%%)\trans
%. \label{eq:kpvi}
%\end{align}
%}%

%\subsection{Preliminaries}
%\label{sec:lemma}
%\tom{Could scratch this paragraph for space}
%We present several lemmata in this section.  The proofs of these statements follow almost immediately from VI (\ref{eq:optvi},\ref{eq:gvi},\ref{eq:kpvi}), and details are in the supplementary material.
% \cref{lm1} can be proved by exploiting VI \eqref{eq:gvi}.   \cref{lm1} can be proved by combining VI \eqref{eq:optvi} and \eqref{eq:kpvi}. \cref{lm3} can be proved by exploiting $\reso{z}\itk = \reso{z}\kp + \resp{z}\itk$ and \cref{lm2}. The details are provided in the supplemental material.

We then apply  VI (\ref{eq:optvi}), (\ref{eq:gvi}), and (\ref{eq:kpvi}) in order to prove the following lemmas for our contraction proof, which show that the difference
between iterates decreases as the iterates approach the true
solution. `The remaining details of the proof are in the supplementary material.

\begin{lemma}\label{lm1}
The iterates $z\itk = (u\itk, v\itk, \gl\itk)\trans$ generated by ADMM satisfy
\begin{align}
(B\resp{v}\itk)\trans \resp{\gl}\itk \geq 0.
%\ggm\itk (B\resp{v}\itk)\trans \resp{\tgl}\itk \geq (1-\ggm\itk)\tau\itk \| B\resp{v}\itk \|^2. \label{eq:lm1}
\end{align}
\end{lemma}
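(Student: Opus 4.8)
The plan is to extract this monotonicity relation directly from the two subproblem optimality conditions \eqref{eq:gvi} at consecutive iterations, combined with the dual update \eqref{eq:updatedual}. First I would write the variational inequality \eqref{eq:gvi} at iteration $k$ and again at iteration $k-1$; that is, the $v$-update optimality says that for all $v$,
\begin{align}
g(v) - g(v\kp) + (v-v\kp)\trans\!\bigl(\tau\itk B\trans(\hu\kp + Bv\kp - b) - B\trans\gl\itk\bigr) \geq 0, \nonumber
\end{align}
and the analogous inequality holds with $k$ replaced by $k-1$. The natural move is to test the first inequality with $v = v\km$ and the second with $v = v\kp$, then add them. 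This cancels the $g$-terms and leaves a single inequality involving only $B\resp{v}\km$, $B\resp{v}\itk$ and the bracketed gradient terms.

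Next I would simplify the resulting bracketed expressions using the dual update. From \eqref{eq:updatedual}, $\gl\kp = \gl\itk + \tau\itk(b - \hu\kp - Bv\kp)$, so $\tau\itk(\hu\kp + Bv\kp - b) = -(\gl\kp - \gl\itk) = -\resp{\gl}\itk$ (and likewise one step back). Hence the combination of the two tested inequalities should collapse to something of the form $(B\resp{v}\itk)\trans(\resp{\gl}\itk - \resp{\gl}\km) + (\text{something nonnegative}) \le 0$ or a clean telescoping statement; the key point is that after substitution the $\hu$ and $b$ terms disappear and only dual increments remain. A standard manipulation of this type — subtracting two consecutive first-order conditions of a strongly-structured proximal step — yields monotonicity of $B v\itk$ paired with the associated dual increment, which is exactly $(B\resp{v}\itk)\trans\resp{\gl}\itk \ge 0$.

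The main obstacle I anticipate is bookkeeping the relaxation term $\hu\kp$ correctly: because $\hu\kp = \ggm\itk Au\kp + (1-\ggm\itk)(b - Bv\itk)$ depends on $\ggm\itk$ and on the previous $v$-iterate, the two tested inequalities are not perfectly symmetric in $k$ versus $k-1$, and one must verify that the cross terms involving $Au\kp$ and $\ggm\itk$ either cancel or combine into a manifestly nonnegative quantity. I would handle this by expressing everything through $\resp{\gl}\itk$ via \eqref{eq:updatedual} before attempting any cancellation, since that substitution is what makes the $\ggm\itk$-dependence drop out. If residual $\ggm$-terms survive, the fallback is to invoke convexity/monotonicity of $\partial g$ directly: $g$ convex implies $(v\kp - v\km)\trans(\xi\kp - \xi\km)\ge 0$ for subgradients $\xi$, and identifying those subgradients from \eqref{eq:gvi} plus \eqref{eq:updatedual} gives the claim without needing any sign control on the relaxation coefficients.
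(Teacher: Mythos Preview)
Your approach is correct and is the standard argument the paper (via its supplementary material) uses: once you substitute the dual update \eqref{eq:updatedual}, the $v$-step optimality \eqref{eq:gvi} collapses to $g(v)-g(v\kp)-(v-v\kp)\trans B\trans\gl\kp\ge 0$, i.e.\ $B\trans\gl\kp\in\partial g(v\kp)$, and monotonicity of $\partial g$ gives the claim directly. Two small fixes: (i) to land on $(B\resp{v}\itk)\trans\resp{\gl}\itk\ge 0$ you should test the iteration-$k$ VI with $v=v\itk$ (not $v\km$) and the iteration-$(k\!-\!1)$ VI with $v=v\kp$, since those are the two iterates being compared; (ii) your concern about residual $\ggm\itk$-terms is unfounded---the substitution $\tau\itk(\hu\kp+Bv\kp-b)=-\resp{\gl}\itk$ eliminates $\hu\kp$ (and with it all $\ggm$-dependence) \emph{before} any cross-testing, so what you call the ``fallback'' is in fact the straight-line proof.
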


%\begin{lemma}\label{lm2}
%The optimal solution $z\opt = (u\opt, v\opt, \gl\opt)\trans$ and sequence $z\itk = (u\itk, v\itk, \gl\itk)\trans$ generated by ADMM satisfy
%%{\small
%%\begin{align} \label{eq:lm2}
%\begin{equation}
%\begin{split}
%&(\tau\itk B \reso{v}\kp + \reso{\gl}\kp)\trans (\tau\itk B \resp{v}\itk + \resp{\gl}\itk)  \\
%& \ \, \geq \frac{1-\ggm\itk}{\ggm\itk} \| \tau\itk B \resp{v}\itk + \resp{\gl}\itk \|^2 \\
%&  \ \ \   +
% \ggm\itk ( (\tau\itk B \reso{v}\itk)\trans\reso{\gl}\itk - (\tau\itk B \reso{v}\kp)\trans\reso{\gl}\kp).
% \end{split} \label{eq:lm2}
%\end{equation}
%%\end{align}
%%}%
%\end{lemma}

\begin{lemma}\label{lm3}
Let $\ggm\itk \geq 1.$ The optimal solution $z\opt$ and iterates $z\itk$ generated by ADMM satisfy
%\begin{align} 
\begin{equation}
\begin{split}
 \frac{2-\ggm\itk}{\ggm\itk}& \| \tau\itk B \resp{v}\itk + \resp{\gl}\itk \|^2 \\
\leq & \ggm\itk(\| \tau\itk B \reso{v}\itk \|^2 + \| \reso{\gl}\itk \|^2) \\
&- (2-\ggm\itk) (\| \tau\itk B \reso{v}\kp \|^2 + \|\reso{\gl}\kp\|^2).
\end{split}
\label{eq:lm3}
\end{equation}
%\end{align}
\end{lemma}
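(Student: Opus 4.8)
The plan is to use the variational-inequality machinery of \cite{he2000alternating,he2012con,he2015non}: extract from the VIs one scalar inequality comparing the distance to the solution at steps $k$ and $k+1$, then reorganize it into the stated contraction by completing squares. First, put $z=z\opt$ in \eqref{eq:kpvi} and $z=z\kp$ in \eqref{eq:optvi} and add. The $f$-terms cancel, and since $F$ in \eqref{eq:deffF} is affine with a skew-symmetric linear part one has $(z\kp-z\opt)\trans(F(z\kp)-F(z\opt))=0$, so the sum collapses to $(\reso{z}\kp)\trans\gO(\resp{z}\itk,\tau\itk,\ggm\itk)\ge0$. Because the $v$-block of $\gO$ vanishes, only the $u$- and $\gl$-blocks contribute, and the $u$-block couples to $A\reso{u}\kp=Au\opt-Au\kp$, which must be rewritten.

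The key manipulation is the elimination of $A\reso{u}\kp$. Substituting the relaxation step \eqref{eq:relax} into the dual update \eqref{eq:updatedual} and using $Bv\kp=Bv\itk+B\resp{v}\itk$ gives $b-Au\kp-Bv\itk=\tfrac{1}{\ggm\itk\tau\itk}(\resp{\gl}\itk+\tau\itk B\resp{v}\itk)$, whence, with feasibility $Au\opt+Bv\opt=b$,
\[
  A\reso{u}\kp \;=\; \frac{1}{\ggm\itk\tau\itk}\bigl(\resp{\gl}\itk+\tau\itk B\resp{v}\itk\bigr)-B\reso{v}\itk .
\]
Inserting this into $(\reso{z}\kp)\trans\gO\ge0$ turns it into a purely scalar inequality among the four vectors $\resp{\gl}\itk,\ \tau\itk B\resp{v}\itk,\ \reso{\gl}\kp,\ \tau\itk B\reso{v}\kp\in\bbR^p$. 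That inequality alone is a hair too weak, so I would also invoke monotonicity of $\partial h$ at $(u\kp,u\opt)$ — obtained by pairing the $u$-block of \eqref{eq:optvi} with \eqref{eq:hvi} via $A\trans\hat{\gl}\kp\in\partial h(u\kp)$ and the same elimination identity — to upgrade it to the sharper bound $(\reso{z}\kp)\trans\gO\ge(B\reso{v}\kp)\trans\reso{\gl}\kp$.

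To finish, bring in Lemma~\ref{lm1} (i.e.\ $(\tau\itk B\resp{v}\itk)\trans\resp{\gl}\itk\ge0$) and monotonicity of $\partial g$ at $(v\opt,v\kp)$ and at $(v\opt,v\itk)$ — both read off from \eqref{eq:optvi} and \eqref{eq:gvi} — which give $(\tau\itk B\reso{v}\kp)\trans\reso{\gl}\kp\ge0$ and $(\tau\itk B\reso{v}\itk)\trans\reso{\gl}\itk\ge0$. Expanding both sides of \eqref{eq:lm3} using $\tau\itk B\reso{v}\itk=\tau\itk B\reso{v}\kp+\tau\itk B\resp{v}\itk$ and $\reso{\gl}\itk=\reso{\gl}\kp+\resp{\gl}\itk$, substituting the sharpened scalar inequality and these three nonnegativities, and collecting terms, one finds that (right side) minus (left side) of \eqref{eq:lm3} is a sum of products (nonnegative quantity)$\times$(coefficient), each coefficient being a rational function of $\ggm\itk$ that is nonnegative precisely on $1\le\ggm\itk<2$; hence it is $\ge0$ (the case $\ggm\itk\ge2$ is trivial, since then the left side of \eqref{eq:lm3} is nonpositive and the right side nonnegative).

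The main obstacle is the bookkeeping in the last two steps: getting the elimination identity for $A\reso{u}\kp$ exactly right — it is what injects the relaxation-dependent coefficients — and, more subtly, realizing that the raw combined VI does not suffice and that the $\partial h$-monotonicity refinement is what makes the cross term in $(B\reso{v}\kp)\trans\reso{\gl}\kp$ come out with the correct sign; after that only the lengthy but routine coefficient computation remains, and that is exactly where the hypothesis $\ggm\itk<2$ is consumed.
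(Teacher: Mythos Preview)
The paper delegates the full proof of this lemma to its supplementary material; the main text only records that the argument rests on the variational inequalities \eqref{eq:optvi}, \eqref{eq:gvi}, \eqref{eq:kpvi} together with Lemma~\ref{lm1}. Your plan follows that skeleton closely and correctly. In particular: setting $z=z\opt$ in \eqref{eq:kpvi} and $z=z\kp$ in \eqref{eq:optvi} and using the skew-symmetry of the linear part of $F$ to collapse the sum to $(\reso{z}\kp)\trans\gO\ge0$ is right; your elimination identity
\[
A\reso{u}\kp=\tfrac{1}{\ggm\itk\tau\itk}\bigl(\resp{\gl}\itk+\tau\itk B\resp{v}\itk\bigr)-B\reso{v}\itk
\]
is right; and your reading of $\partial g$-monotonicity as $(B\reso{v}\kp)\trans\reso{\gl}\kp\ge0$ and $(B\reso{v}\itk)\trans\reso{\gl}\itk\ge0$ is right (the subgradient inclusion coming from \eqref{eq:gvi} and \eqref{eq:optvi} is $B\trans\gl\in\partial g(v)$, not $-B\trans\gl$, so the signs fall out as you claim). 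These are precisely the ingredients the paper names.

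One step of your outline is less transparent: the ``sharpening'' $(\reso{z}\kp)\trans\gO\ge(B\reso{v}\kp)\trans\reso{\gl}\kp$ via a separate invocation of $\partial h$-monotonicity through \eqref{eq:hvi}. The paper does not list \eqref{eq:hvi} among the tools for these lemmas, only the aggregated \eqref{eq:kpvi}, so it is plausible that the raw bound $(\reso{z}\kp)\trans\gO\ge0$ together with Lemma~\ref{lm1} and the two $\partial g$-monotonicity facts already suffices for the coefficient comparison; you may be importing an ingredient the paper does not need (or one its supplement uses implicitly). Either way this does not invalidate the plan---once you actually carry out the quadratic-form bookkeeping in the four vectors $\resp{\gl}\itk$, $\tau\itk B\resp{v}\itk$, $\reso{\gl}\kp$, $\tau\itk B\reso{v}\kp$ you will see whether that extra inequality is consumed. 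The remainder of your outline, expanding both sides of \eqref{eq:lm3} via $\reso{(\cdot)}\itk=\reso{(\cdot)}\kp+\resp{(\cdot)}\itk$ and verifying the discrepancy is a nonnegative combination of the established nonnegative scalars with coefficients that are rational in $\ggm\itk$ and nonnegative on $[1,2)$, is exactly the contraction-style finish the paper advertises.
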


\subsection{Convergence with adaptivity}
\label{sec:thm}

We are now ready to state our main convergence results.  The proof of Theorem \ref{thmas1} is shown here in full, and leverages \cref{lm3} to produce a contraction argument.     The proof of Theorem \ref{thmas2} is extremely similar, and is shown in the supplementary material.

\begin{theorem}\label{thmas1}
% If \cref{as1} is satisfied,
 Suppose \cref{as1} holds.
 Then, the iterates %$z\itk$ 
 $z\itk = (u\itk, v\itk, \gl\itk)\trans$
 generated by ADMM satisfy
\begin{align}
\lim_{k\rightarrow \infty} \| r\itk \| = 0 \quad \text{and} \quad \lim_{k\rightarrow \infty} \| d\itk \| = 0.
\end{align}
\end{theorem}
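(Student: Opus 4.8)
The plan is to construct a Lyapunov (energy) function from the quantities appearing in \cref{lm3}, show it decreases up to a summable error term, and then read off convergence of the two residuals. Concretely, I would set $V\itk := \| \tau\km B \reso{v}\itk \|^2 + \| \reso{\gl}\itk \|^2$. The right-hand side of \cref{lm3} carries the penalty parameter $\tau\itk$ at \emph{both} iteration indices, so the key move is the bound $\| \tau\itk B \reso{v}\itk \|^2 \le \max(\tau\itk^2/\tau\km^2,\,1)\, \| \tau\km B \reso{v}\itk \|^2$, after which the final bracket of \cref{lm3} is exactly $V\kp$. Dividing \cref{lm3} by $2-\ggm\itk>0$ and invoking the definition of $\eta\itk^2$ in \cref{as1}, everything collapses to the one-step estimate
\[
V\kp + \tfrac{1}{\ggm\itk}\bigl\| \tau\itk B \resp{v}\itk + \resp{\gl}\itk \bigr\|^2 \;\le\; (1+\eta\itk^2)\,V\itk .
\]

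Since $\sum_{k}\eta\itk^2 < \infty$, the infinite product $\prod_{k}(1+\eta\itk^2)$ converges, so $V\itk$ is uniformly bounded, say $V\itk \le M$. Feeding this back and using $\ggm\itk < 2$ gives $V\kp + \tfrac12 \| \tau\itk B \resp{v}\itk + \resp{\gl}\itk \|^2 \le V\itk + M\,\eta\itk^2$, and telescoping over $k$ yields $\sum_{k} \| \tau\itk B \resp{v}\itk + \resp{\gl}\itk \|^2 < \infty$; in particular $\tau\itk B \resp{v}\itk + \resp{\gl}\itk \to 0$. By \cref{lm1}, $(B\resp{v}\itk)\trans \resp{\gl}\itk \ge 0$, so $\| \tau\itk B \resp{v}\itk + \resp{\gl}\itk \|^2 \ge \tau\itk^2 \|B\resp{v}\itk\|^2 + \|\resp{\gl}\itk\|^2$, and hence $\tau\itk\|B\resp{v}\itk\| \to 0$ and $\|\resp{\gl}\itk\| \to 0$ separately. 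To get back to the residuals, I would substitute the relaxation step \eqref{eq:relax} into the dual update \eqref{eq:updatedual} to obtain the identity $\resp{\gl}\itk = \tau\itk\bigl(\ggm\itk\, r\kp - (1-\ggm\itk)\, B\resp{v}\itk\bigr)$; since $|1-\ggm\itk| < 1$, this gives $\tau\itk\ggm\itk\|r\kp\| \le \|\resp{\gl}\itk\| + \tau\itk\|B\resp{v}\itk\| \to 0$, and because $\lim_{k\to\infty} 1/\tau\itk^2 < \infty$ forces $\liminf_{k\to\infty}\tau\itk > 0$ while $\ggm\itk \ge 1$, we conclude $\|r\itk\| \to 0$. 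Finally, for the dual residual $d\itk = \tau\itk A\trans B \resp{v}\km$, summability of $\eta\itk^2$ keeps the ratios $\tau\itk/\tau\km$ uniformly bounded, so $\|d\itk\| \le \|A\trans\|\,(\tau\itk/\tau\km)\cdot\tau\km\|B\resp{v}\km\| \to 0$.

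The main obstacle is the bookkeeping of penalty-parameter indices: \cref{lm3}, the dual update, and the definitions of $r\itk$ and $d\itk$ all mix $\tau\itk$ with $\tau\km$, and the whole argument only closes because \cref{as1} is tailored to control precisely these ratios (through $\eta\itk$) and to keep $\tau\itk$ bounded away from $0$. Choosing the energy $V\itk$ so that the telescoping survives the off-by-one $\tau$-factors --- absorbing them into the summable term $\eta\itk^2$ --- is the crux; the remaining estimates are routine.
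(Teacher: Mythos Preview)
Your proof is correct and follows essentially the same route as the paper's: divide \cref{lm3} by $2-\ggm\itk$, absorb the $\tau$-index mismatch into the factor $(1+\eta\itk^2)$ from \cref{as1}, telescope to obtain $\sum_k\|\tau\itk B\resp{v}\itk+\resp{\gl}\itk\|^2<\infty$, split this via \cref{lm1}, and then read off $r\itk\to0$ and $d\itk\to0$ from the dual-update identity together with the boundedness of $1/\tau\itk$ and $\tau\itk/\tau\km$. The only cosmetic difference is that you name a Lyapunov function $V\itk$ and use a two-pass argument (bound $V\itk$ first, then telescope), whereas the paper telescopes once with the running product $\prod_t(1+\eta_t^2)$; the content is the same.
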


\begin{proof}
\cref{as1} implies
%{\small
\begin{align}
\frac{\ggm\itk}{2-\ggm\itk}\tau\itk^2 \leq (1+\eta\itk^2) \tau\km^2 \ \text{and}
\ \frac{\ggm\itk}{2-\ggm\itk} \leq (1+\eta\itk^2). \label{eq:th1a}
\end{align}
If $\ggm\itk < 2$ as in \cref{as1}, then \cref{lm3} shows
\begin{align}
& \frac{1}{\ggm\itk} \| \tau\itk B \resp{v}\itk + \resp{\gl}\itk \|^2  \nonumber\\
\leq & \frac{\ggm\itk}{2-\ggm\itk}(\tau\itk^2 \|  B \reso{v}\itk \|^2 + \| \reso{\gl}\itk \|^2) \nonumber\\
& \quad-  (\tau\itk^2 \|  B \reso{v}\kp \|^2 + \|\reso{\gl}\kp\|^2) \label{eq:th1t1}\\
\leq & (1+\eta\itk^2)(\tau\km^2 \|  B \reso{v}\itk \|^2 + \| \reso{\gl}\itk \|^2) \nonumber\\
& \quad-  (\tau\itk^2 \|  B \reso{v}\kp \|^2 + \|\reso{\gl}\kp\|^2), \label{eq:th1t2}
\end{align}
where \eqref{eq:th1a} is used to get from \eqref{eq:th1t1} to \eqref{eq:th1t2}. Accumulating inequality \eqref{eq:th1t2} from $k=0$ to $N$ shows
\begin{align}\label{eq:th1t3}
 \sum_{k=0}^{N}& \prod_{t=k+1}^{N} (1+\eta_t^2) \frac{1}{\ggm\itk} \| \tau\itk B \resp{v}\itk + \resp{\gl}\itk \|^2 \nonumber\\
 &\leq  \prod_{k=1}^{N} (1+\eta_t^2) (\tau_0^2 \|  B \reso{v}_0 \|^2 + \| \reso{\gl}_0 \|^2).
\end{align}
\cref{as1} also implies $\prod_{t=1}^{\infty} (1+\eta_t^2) \!\! < \!\! \infty$, and $\prod_{t=k+1}^{N} (1+\eta_t^2) \frac{1}{\ggm\itk} \!\! \geq \!\! \frac{1}{\ggm\itk} \!\! >\!\! \half$. Then, \eqref{eq:th1t3} indicates $\sum_{k=0}^{\infty} \| \tau\itk B \resp{v}\itk + \resp{\gl}\itk \|^2 < \infty,$ and
\begin{align}
\lim_{k\rightarrow \infty} \| \tau\itk B \resp{v}\itk + \resp{\gl}\itk \|^2 = 0.
\end{align}
Now, from \cref{lm1}, $(B\resp{v}\itk)\trans \resp{\gl}\itk \geq 0,$ and so
{\small
\begin{align}
&\lim_{k\rightarrow \infty} \| \resp{\gl}\itk \|^2 \leq \lim_{k\rightarrow \infty} \| \tau\itk B \resp{v}\itk + \resp{\gl}\itk \|^2  = 0,\\
& \lim_{k\rightarrow \infty} \| \tau\itk   B\resp{v}\itk\|^2 \leq \lim_{k\rightarrow \infty} \| \tau\itk B \resp{v}\itk + \resp{\gl}\itk \|^2 = 0.
\end{align}
}%
The residuals $r\itk, d\itk$ in \eqref{eq:res} satisfy
%{\small
 \begin{align}
 %\resp{\gl}\itk = \ggm\itk \tau\itk r\kp + (\ggm\itk-1)\tau\itk B\resp{v}\itk \ \text{and} \
 & r\itk = \frac{1}{\ggm\itk\tau\itk} \resp{\gl}\km -\frac{\ggm\itk-1}{\ggm\itk} B\resp{v}\km, \\
 &
 %\text{and} \
  d\itk = \tau\itk A\trans B\resp{v}\km,
 \end{align}
 %}%
 from which we get
\begin{multline}
 \lim_{k\rightarrow \infty} \| r\itk \| \leq  \lim_{k\rightarrow \infty} \frac{1}{\ggm\itk\tau\itk} \|\resp{\gl}\km\| \nonumber\\
 + \frac{\ggm\itk-1}{\ggm\itk\tau\km ^2} \|\tau\km B\resp{v}\km\| = 0, \text{ and }
\end{multline}
\begin{align*}
& \lim_{k\rightarrow \infty} \| d\itk \| \leq  \lim_{k\rightarrow \infty} \|A\| \| \tau\itk B\resp{v}\km \| \\
& \qquad  \leq  \lim_{k\rightarrow \infty} \sqrt{1+\eta\itk^2}\|A\| \, \| \tau\km B\resp{v}\km \|  = 0.
\end{align*}
\phantom{easdasd}\\[-1.0cm]
\end{proof}

Similar methods can be used to prove the following about convergence under  \cref{as2}.  The proof of the following theorem is given in the supplementary material.
\begin{theorem}\label{thmas2}
Suppose \cref{as2} holds. Then, the iterates  
%$z\itk$ 
$z\itk = (u\itk, v\itk, \gl\itk)\trans$
generated by ADMM satisfy
\begin{align}
\lim_{k\rightarrow \infty} \| r\itk \| = 0 \quad \text{and} \quad \lim_{k\rightarrow \infty} \| d\itk \| = 0.
\end{align}
\end{theorem}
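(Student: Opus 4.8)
To establish this, the plan is to mirror the contraction argument just given for \cref{thmas1}, changing only how \cref{lm3} is normalized. Where that proof divides \cref{lm3} by $(2-\ggm\itk)$ and then invokes the penalty bound \eqref{eq:th1a}, here I would divide \cref{lm3} by $(2-\ggm\itk)\,\tau\itk^2$. Using $\ggm\itk<2$, this gives
\[
\frac{1}{\ggm\itk\tau\itk^2}\big\|\tau\itk B\resp{v}\itk + \resp{\gl}\itk\big\|^2
\le \frac{\ggm\itk}{2-\ggm\itk}\Big(\|B\reso{v}\itk\|^2 + \tfrac{1}{\tau\itk^2}\|\reso{\gl}\itk\|^2\Big)
-\Big(\|B\reso{v}\kp\|^2 + \tfrac{1}{\tau\itk^2}\|\reso{\gl}\kp\|^2\Big).
\]
Defining $\tilde{V}\itk := \|B\reso{v}\itk\|^2 + \tau\km^{-2}\|\reso{\gl}\itk\|^2$, the last bracket is exactly $\tilde{V}\kp$, and the two inequalities $\tfrac{\ggm\itk}{2-\ggm\itk}\le 1+\gt\itk^2$ and $\tfrac{\ggm\itk}{2-\ggm\itk}\,\tau\km^2/\tau\itk^2\le 1+\gt\itk^2$ — both immediate from $1+\gt\itk^2 = \tfrac{\ggm\itk}{2-\ggm\itk}\max(\tau\km^2/\tau\itk^2,\,1)$ in \cref{as2} — bound the right-hand side coordinatewise by $(1+\gt\itk^2)\tilde{V}\itk$. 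This yields the exact analog of \eqref{eq:th1t2}, namely $\tfrac{1}{\ggm\itk\tau\itk^2}\|\tau\itk B\resp{v}\itk + \resp{\gl}\itk\|^2 \le (1+\gt\itk^2)\tilde{V}\itk - \tilde{V}\kp$, with $\eta\itk$ replaced by $\gt\itk$.

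From here the argument is a transcription of the proof of \cref{thmas1}: accumulate this inequality from $k=0$ to $N$ with the telescoping weights $\prod_{t=k+1}^{N}(1+\gt_t^2)$ to obtain the analog of \eqref{eq:th1t3}, and use $\sum_k\gt\itk^2<\infty$ to get $\prod_{t}(1+\gt_t^2)<\infty$. This is where \cref{as2} enters essentially: since $\tau\itk^2$ is bounded above, the weight $\prod_{t=k+1}^{N}(1+\gt_t^2)\,\tfrac{1}{\ggm\itk\tau\itk^2}$ stays bounded below by a positive constant (also using $\ggm\itk<2$), so $\sum_k\|\tau\itk B\resp{v}\itk+\resp{\gl}\itk\|^2<\infty$ and hence $\|\tau\itk B\resp{v}\itk+\resp{\gl}\itk\|\to 0$. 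Then \cref{lm1} splits this into $\|\resp{\gl}\itk\|\to 0$ and $\|\tau\itk B\resp{v}\itk\|\to 0$, and substituting the same identities for $r\itk$ and $d\itk$ used in the proof of \cref{thmas1} and letting $k\to\infty$ gives $\|r\itk\|\to 0$ and $\|d\itk\|\to 0$.

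The one point needing extra care is that the final step of \cref{thmas1} reads a lower bound on $\tau\itk$ directly off \cref{as1}, whereas \cref{as2} only posits an upper bound on $\tau\itk^2$. But $\ggm\itk\ge 1$ already forces $\tau\km^2/\tau\itk^2\le 1+\gt\itk^2$, hence $\tau\itk^2 \ge \tau_0^2/\prod_t(1+\gt_t^2)>0$, so a uniform lower bound on $\tau\itk$ is automatic; likewise $\gt\itk\to 0$ forces $\ggm\itk\to 1$. With $\tau\itk$ bounded above and below and $\tfrac{\ggm\itk-1}{\ggm\itk}\to 0$, the quantities $1/\tau\itk$, $\tau\itk/\tau\km$, and $\tfrac{\ggm\itk-1}{\ggm\itk}$ entering the bounds for $\|r\itk\|$ and $\|d\itk\|$ are all controlled exactly as before. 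I expect the only step requiring any insight to be choosing the normalization — dividing \cref{lm3} by $(2-\ggm\itk)\tau\itk^2$ rather than by $(2-\ggm\itk)$ — which is dictated by the inverted penalty ratio $\tau\km^2/\tau\itk^2$ appearing in $\gt\itk$; everything downstream parallels \cref{thmas1} line for line.
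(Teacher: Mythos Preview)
Your proposal is correct and matches the approach the paper indicates (the paper states the proof is ``extremely similar'' to that of \cref{thmas1} and defers it to the supplement). The key modification you identify --- dividing \cref{lm3} by $(2-\ggm\itk)\tau\itk^2$ rather than $(2-\ggm\itk)$, so that the Lyapunov quantity becomes $\|B\reso{v}\itk\|^2 + \tau\km^{-2}\|\reso{\gl}\itk\|^2$ to match the inverted ratio $\tau\km^2/\tau\itk^2$ in $\gt\itk$ --- is exactly the natural dual of the \cref{thmas1} argument, and your handling of the lower bound on $\tau\itk$ (deriving it from $\tau\km^2/\tau\itk^2\le 1+\gt\itk^2$ and $\prod_t(1+\gt_t^2)<\infty$) correctly fills the one gap that \cref{as2} does not supply directly.
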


%%%%%%%%%%%%%%% EXPERIMENT
\section{Applications}
\label{sec:app}
We focus on the following statistical and image processing problems involving non-differentiable objectives:
linear regression with elastic net regularization (EN), low-rank least squares (LRLS), quadratic programming (QP), consensus $\ell_1$-regularized logistic regression, support vector machine (SVM), total variation image restoration (TVIR), and robust principle component analysis (RPCA). We study several vision benchmark datasets such as the extended Yale~B face dataset \cite{georghiades2001few}, MNIST digital images \cite{lecun1998gradient}, and CIFAR10 object images\footnote{We use the first batch of CIFAR10 that contains $10000$ samples.} \cite{krizhevsky2009learning}. We also use synthetic and benchmark datasets from ~\cite{efron2004least,zou2005regularization,lee2006efficient,schmidt2007fast,liu2009large,goldstein2014fast}, which are obtained from the UCI repository and the LIBSVM page. The experimental setups for each problem are briefly described here, and the implementation details are provided in the supplementary material.

\vspace{-0.4cm}
\paragraph{Linear regression with EN regularization} Elastic net (EN) is a modification of the $\ell_1$-norm (or LASSO) regularizer that helps dealing with highly correlated variables~\cite{zou2005regularization,goldstein2014fast}, and requires solving
\begin{eqnarray}
\min_x \frac{1}{2} \| Dx - c \|_2^2 + \rho_1 \|x\|_1 + \frac{\rho_2}{2} \| x\|_2^2, \label{eq:enet}
\end{eqnarray}
where $\|\cdot\|_1 $ denotes the $\ell_1$-norm, $D$ is the data matrix, $c$ contains measurements, and $x$ is the vector of regression coefficient. 
%ADMM is applied by splitting the regression term and the non-differentiable regularizer composed of $\ell_1$ and $\ell_2$ norm. 

\vspace{-0.4cm}
\paragraph{Low-rank least squares (LRLS)}
The nuclear norm (the $\ell_1$-norm of the matrix singular values) is a convex surrogate for matrix rank. ADMM has been applied to solve low rank least squares problems~\cite{yang2013linearized,xu2015bmvc}
\begin{eqnarray}
\min_{X} \frac{1}{2} \|DX - C\|_F^2 + \rho_1 \| X \|_* + \frac{\rho_2}{2} \| X\|_F^2,
\end{eqnarray}
where $\|\cdot\|_* $ denotes the nuclear norm, $\|\cdot\|_F $ denotes the Frobenius norm, $D\in \bbR^{n \times m}$ is a data matrix, $C\in \bbR^{n \times d}$ contains measurements, and $X\in \bbR^{m \times d}$ contains variables. 

ADMM is applied by splitting the regression term and the non-differentiable regularizer composed of nuclear and Frobenius norm. LRLS has been used to formulate exemplar classifiers and discover visual subcategories \cite{xu2015bmvc}.

\vspace{-0.4cm}
\paragraph{ SVM and QP} Support vector machine (SVM) is one of the most successful binary classifiers for computer vision. The dual  of the SVM is a QP problem,
\begin{align*}
\min_{z}  &\quad \frac{1}{2} z^{T}Qz - e^{T}z  \\
\st & \quad  c^{T}z = 0 ~\mbox{and}~ 0 \leq z \leq C,
\end{align*}
where $z$ is the SVM dual variable, $Q$ is the kernel matrix, $c$ is a vector of labels,  $e$ is a vector of ones, and  $C > 0$~\cite{chang2011libsvm}. The canonical QP is also considered,
\begin{equation}
\min_x \frac{1}{2}x^{T}Qx + q^{T}x~~~~\st~~Dx \leq c. \label{eq:qp}
\end{equation}

% !TEX root =aradmm.tex
\begin{table*}[t]
\centering
\caption{\small Iterations (and runtime in seconds) for various applications. Absence of convergence after $n$ iterations is indicated as $n+$.
}
\setlength{\tabcolsep}{3pt}
\small
\begin{threeparttable}
\begin{tabular}{|c|c|c||c|c|c|c|>{\bfseries}c|}
\hline
Application & Dataset & \tabincell{c}{\#samples $\times$ \\ \#features\tnote{1}
} & \tabincell{c}{Vanilla\\ ADMM} & \tabincell{c}{Relaxed \\ADMM} & \tabincell{c}{Residual\\ balance} & \tabincell{c}{Adaptive \\ADMM} &  \tabincell{c}{Proposed \\ ARADMM} \\
\hline\hline
\multirow{6}{*}{\tabincell{c}{Elastic net\\regression}}
& Synthetic & 50 $\times$ 40 & 2000+(.642) & 2000+(.660) & 424(.144) & 102(.051) & 70(.026) \\
& MNIST & 60000 $\times$ 784 & 1225(29.4) & 816(19.9) & 94(2.28) & 41(.943) & 21(.549) \\
& CIFAR10 &  10000 $\times$ 3072  & 2000+(690) & 2000+(697) & 556(193) & 2000+(669) & 94(31.7) \\
& News20 & 19996 $\times$ 1355191 & 2000+(1.21e4) & 2000+(9.16e3) & 227(914) & 104(391) & 71(287) \\
& Rcv1 & 20242 $\times$ 47236  & 2000+(1.20e3) & 1823(802) & 196(79.1) & 104(35.7) & 64(26.0) \\
& Realsim & 72309 $\times$ 20958 & 2000+(4.26e3) & 2000+(4.33e3) & 341(355) & 152(125) & 107(88.2) \\
\hline
\multirow{5}{*}{\tabincell{c}{Low rank \\least squares}}
&Synthetic & 1000 $\times$ 200 & 2000+(118) & 2000+(116) & 268(15.1) & 26(1.55) & 18(1.04) \\
%&AC & 690 $\times$ 14 & 2000+(3.15) & 2000+(3.17) & 333(.552) & 56(.112) & 44(.076) \\
%&AH & 270 $\times$ 13 & 2000+(1.69) & 2000+(1.68) & 267(.234) & 57(.056) & 40(.042) \\
&German & 1000 $\times$ 24 & 2000+(4.72) & 2000+(4.72) & 642(1.52) & 130(.334) & 52(.125) \\
%&Hepatitis & 155 $\times$ 19 & 2000+(1.54) & 2000+(1.45) & 481(.324) & 127(.091) & 73(.059) \\
%&ionosphere & 351 $\times$ 34 & 2000(4.228) & 2000(4.140) & 316(0.647) & 74(0.177) & 82(0.169) \\
%&Spect & 80 $\times$ 22 & 2000+(1.72) & 2000+(1.64) & 227(.194) & 182(.168) & 76(.072) \\
&Spectf & 80 $\times$ 44 & 2000+(2.70) & 2000+(2.74) & 336(.455) & 162(.236) & 105(.150) \\
%&WBC & 683 $\times$ 10 & 2000+(3.13) & 2000+(3.10) & 689(1.11) & 61(.107) & 35(.055) \\
& MNIST & 60000 $\times$ 784 & 200+(1.86e3) & 200+(2.08e3) & 200+(3.29e3) & 200+(3.46e3) & 38(658) \\
& CIFAR10 & 10000 $\times$ 3072 & 200+(7.24e3) & 200+(1.33e4) & 53(1.60e3) & 8(208) & 6(156) \\
\hline
\multirow{3}{*}{\tabincell{c}{QP and \\dual SVM}} &Synthetic & 250 $\times$ 500 & 1224(11.5) & 823(7.49) & 626(5.93) & 170(1.57) & 100(.914) \\
%&AC & 690 $\times$ 14 & 539(7.01) & 364(4.63) & 78(1.02) & 111(1.41) & 68(1.02) \\
%&AH & 270 $\times$ 13 & 347(.764) & 266(.585) & 103(.227) & 92(.194) & 63(.138) \\
&German & 1000 $\times$ 24 & 2000+(58.8) & 2000+(61.8) & 1592(45.0) & 1393(38.9) & 1238(34.9) \\
%& Hepatitis & 155 $\times$ 19 & 584(.645) & 388(.372) & 75(.086) & 70(.087) & 46(.050) \\ %rbf kernel
%& Hepatitis & 155 $\times$ 19 & 2000+(1.54) & 2000+(1.46) & 1356(.986) & 2000+(1.36) & 774(.486) \\
%& Ionosphere & 351 $\times$ 34 & 2000+(4.83) & 2000+(4.80) & 1723(4.13) & 1262(3.05) & 848(2.05) \\ %rbf kernel
%& Spect & 80 $\times$ 22 & 2000+(.820) & 2000+(.807) & 231(.100) & 2000+(.873) & 391(.176) \\
& Spectf & 80 $\times$ 44 & 2000+(.846) & 2000+(.777) & 169(.070) & 175(.086) & 53(.026) \\
%& WBC & 683 $\times$ 10 & 1447(18.1) & 972(12.1) & 194(2.43) & 2000+(25.1) & 102(1.32) \\
\hline
\multirow{5}{*}{\tabincell{c}{Consensus \\ logistic \\regression}}
& Synthetic & 1000 $\times$ 25 & 590(9.93) & 391(6.97) & 70(1.23) & 35(.609) & 20(.355) \\
%&AC & 690 $\times$ 14 & 2000+(35.2) & 2000+(30.2) & 143(2.35) & 56(.924) & 41(.727) \\
%&AH & 270 $\times$ 13 & 2000(18.1) & 1490(13.9) & 79(.930) & 32(.391) & 21(.288) \\
&German & 1000 $\times$ 24 & 2000+(34.3) & 2000+(66.6) & 151(2.60) & 35(.691) & 26(.580) \\
%&Hepatitis & 155 $\times$ 19 & 2000+(30.1) & 2000+(25.6) & 135(1.86) & 71(.857) & 41(.518) \\
%&Ionosphere & 351 $\times$ 34 & 2000+(60.4) & 2000(55.2) & 260(5.04) & 110(1.52) & 67(1.07) \\
%&Pima & 768 $\times$ 8 & 2000+(16.4) & 2000+(15.5) & 80(.673) & 28(.261) & 18(.166) \\
%& Sonar & 208 $\times$ 60 & 2000+(141) & 2000+(124) & 293(11.3) & 175(4.16) & 152(3.49) \\
%& Spect & 80 $\times$ 22 & 1543(18.0) & 1027(12.6) & 105(1.13) & 49(.481) & 42(.431) \\
&Spectf & 80 $\times$ 44 & 1005(20.1) & 667(14.4) & 117(1.98) & 145(1.63) & 85(1.07) \\
%&Splice & 1000 $\times$ 60 & 2000+(41.3) & 2000+(72.8) & 171(3.69) & 34(.856) & 22(.672) \\
%&WBC & 683 $\times$ 10 & 934(9.47) & 621(6.47) & 69(.865) & 36(.453) & 23(.320) \\
&MNIST & 60000 $\times$ 784 & 200+(2.99e3) & 200+(3.47e3) & 200+(1.37e3) & 49(536) & 28(333) \\
&CIFAR10 & 10000 $\times$ 3072 & 200+(593) & 200+(2.08e3) & 200+(1.54e3) & 131(165) & 19(33.7) \\
\hline
\multirow{5}{*}{\tabincell{c}{Unwrapping \\ SVM}}
&Synthetic & 1000 $\times$ 25 & 2000+(1.13) & 1418(.844) & 2000+(1.16) & 355(.229) & 147(.094) \\
%&AC & 690 $\times$ 14 & 739(.973) & 484(.552) & 1893(2.17) & 723(.861) & 334(.375) \\
%&AH & 270 $\times$ 13 & 286(.146) & 230(.131) & 765(.417) & 453(.263) & 114(.069) \\
&German & 1000 $\times$ 24 & 753(1.88) & 560(1.37) & 2000+(4.98) & 572(1.44) & 213(.545) \\
%&Hepatitis & 155 $\times$ 19 & 257(.156) & 235(.086) & 411(.149) & 214(.075) & 164(.061) \\
%&Ionosphere & 351 $\times$ 34 & 471(.264) & 341(.181) & 661(.296) & 691(.297) & 170(.072) \\
%&Spect & 80 $\times$ 22 & 195(.064) & 144(.051) & 195(.052) & 117(.038) & 107(.037) \\
&Spectf & 80 $\times$ 44 & 567(.203) & 367(.112) & 567(.185) & 207(.068) & 149(.052) \\
%&WBC & 683 $\times$ 10 & 475(.380) & 370(.316) & 677(.501) & 113(.099) & 74(.058) \\
& MNIST & 60000 $\times$ 784 & 128(130) & 118(111) & 163(153) & 200+(217) & 67(71.0) \\
& CIFAR10 & 10000 $\times$ 3072 & 200+(512) & 200+(532) & 200+(516) & 89(285) & 57(143) \\
\hline
\multirow{3}{*}{\tabincell{c}{Image \\ denoising}}
&Barbara & 512 $\times$ 512 & 262(35.0) & 175(23.6) & 74(10.0) & 59(8.67) & 38(5.57) \\
&Cameraman & 256 $\times$ 256 & 311(8.96) & 208(5.89) & 82(2.29) & 88(2.76) & 35(1.08) \\
&Lena & 512 $\times$ 512 & 347(46.3) & 232(31.3) & 94(12.5) & 68(9.70) & 39(5.58) \\
\hline
\multirow{3}{*}{\tabincell{c}{Robust \\ PCA}}
& FaceSet1 & 64 $\times$ 1024 & 2000+(41.1) & 1507(30.3) & 560(11.1) & 561(11.9) & 267(5.65) \\
& FaceSet2 & 64 $\times$ 1024 & 2000+(41.1) & 2000+(41.4) & 263(5.54) & 388(9.00) & 188(4.02) \\
& FaceSet3 & 64 $\times$ 1024 & 2000+(39.4) & 1843(36.3) & 375(7.44) & 473(9.89) & 299(6.27) \\
\hline
\end{tabular}%
\label{tab:exp}%
\begin{tablenotes}
%    \item AC: Australian Credit; AH: Australian Heart; WBC: Wisconsin Breast Cancer.
    \item[1] \#constrains $\times$ \#unknowns for canonical QP; width $\times$ height for image restoration. 
     \vspace{-2mm}
\end{tablenotes}
\end{threeparttable}
\end{table*}%

\vspace{-0.5cm}
\paragraph{Consensus $\ell_1$-regularized logistic regression}
ADMM has become an important tool for solving distributed optimization problems~\cite{boyd2011admm}. A typical problem is the consensus $\ell_1$-regularized logistic regression
\begin{equation}
\begin{split}
&\min_{x_i, z} \sum_{i=1}^{N} \sum_{j=1}^{n_i} \log(1+\exp(-c_jD_jx_i)) + \rho \|z\|_1 \\
&\st~~ x_i - z = 0, i=1,\ldots,N,
\end{split}
\end{equation}
where $x_i \in \bbR^m$ represents the local variable on the $i$th distributed node, $z$ is the global variable, $n_i$ is the number of samples in the $i$th block, $D_j \in \bbR^m$ is the $j$th sample, and $c_j\in \{-1,+1\}$ is the corresponding label.

\vspace{-0.4cm}
\paragraph{Unwrapped SVM}
The unwrapped formulation of SVM~\cite{goldstein2016unwrapping}, which can be used in distributed computing environments via ``transpose reduction'' tricks, applies ADMM to the primal form of SVM to solve
\begin{equation}
\min_x \frac{1}{2} \| x\|_2^2 +  C \sum_{j=1}^{n} \max\{1-c_jD_j^Tx, \, 0\},
\end{equation}
where $D_j \in \bbR^m$ is the $j$th sample of training data, and $c_j\in \{-1,1\}$ is the corresponding label. ADMM is applied by splitting the $\ell_2$-norm regularizer and the non-differentiable hinge loss term.

\vspace{-0.4cm}
\paragraph{Total variation image denoising (TVID)}
Total variation image denoising is often performed by solving \cite{rudin1992nonlinear}
\begin{eqnarray}
\min_{x} \frac{1}{2} \| x-c \|_2^2 + \rho \| \grad x\|_1
\end{eqnarray}
where $c$ represents given noisy image, and $\grad$ is the discrete gradient operator, which computes differences between adjacent image pixels. 
%The gradient operator $\grad$ is a linear operator and can efficiently calculated by fast Fourier transform. 
%$\grad$ can be represented as $\grad = (\mcF^T L_1 \mcF; \,  \mcF^T L_2 \mcF)$, where $\mcF$ denotes the discrete Fourier transform, $\mcF^T$ denotes the inverse Fourier transform, $L_1, L_2$ represent the gradient operator in Fourier domain for the first and second dimension of an image, respectively.
ADMM is applied by splitting the $\ell_2$-norm term and the non-differentiable total variation term.

\vspace{-0.4cm}
\paragraph{RPCA}
Robust principal component analysis (RPCA) has broad applications in computer vision and imaging \cite{wright2009robust,naikal2011informative,ozyesil2015robust}. RPCA recovers a low-rank matrix and a sparse matrix by solving
\begin{eqnarray}
\min_{Z, E} \| Z \|_* + \rho \| E \|_1 ~~\st~~ Z + E =C,
\end{eqnarray}
where the nuclear norm $\| \cdot \|_*$ is used to obtain a low rank matrix $Z$, and $\| \cdot \|_1$ is used to obtain a sparse error $E$.

\begin{figure*}[tbhp]
\centerline{
\includegraphics[width=0.95\linewidth]{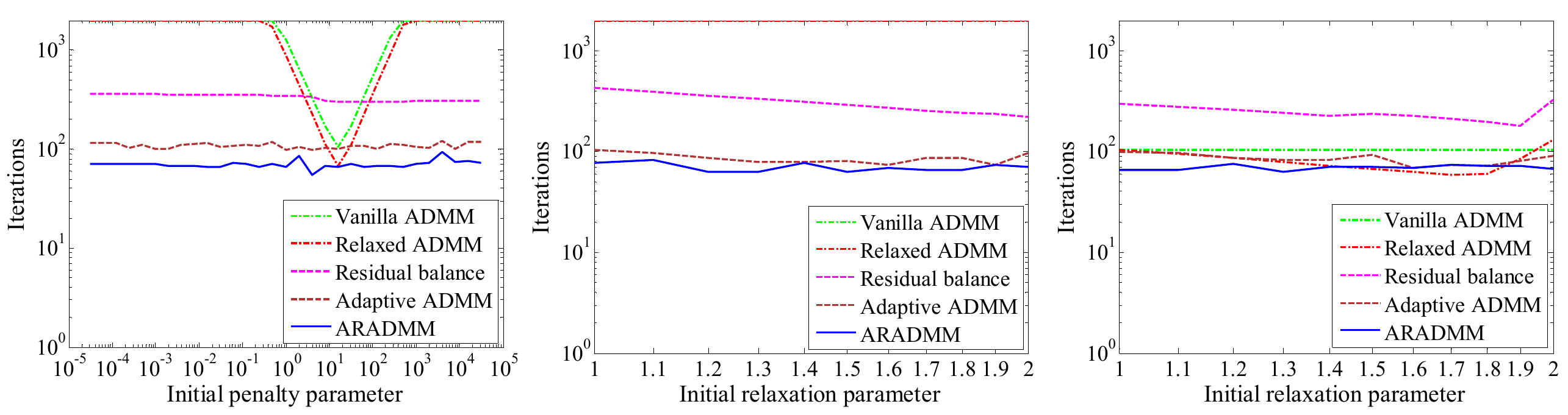}
}
%\vspace{-4mm}
\caption{\small Sensitivity of convergence speed for the synthetic problem of EN regularized linear regression. (left) sensitivity to the initial penalty $\tau_0$; (middle) sensitivity to relaxation $\ggm_0$; (right) sensitivity to relaxation $\ggm_0$ when optimal $\tau_0$ is selected by grid search.}
\label{fig:en}
\vspace{-2mm}
\end{figure*}

\begin{figure}[tbhp]
\vspace{-3mm}
\centerline{
\includegraphics[width=0.7\linewidth]{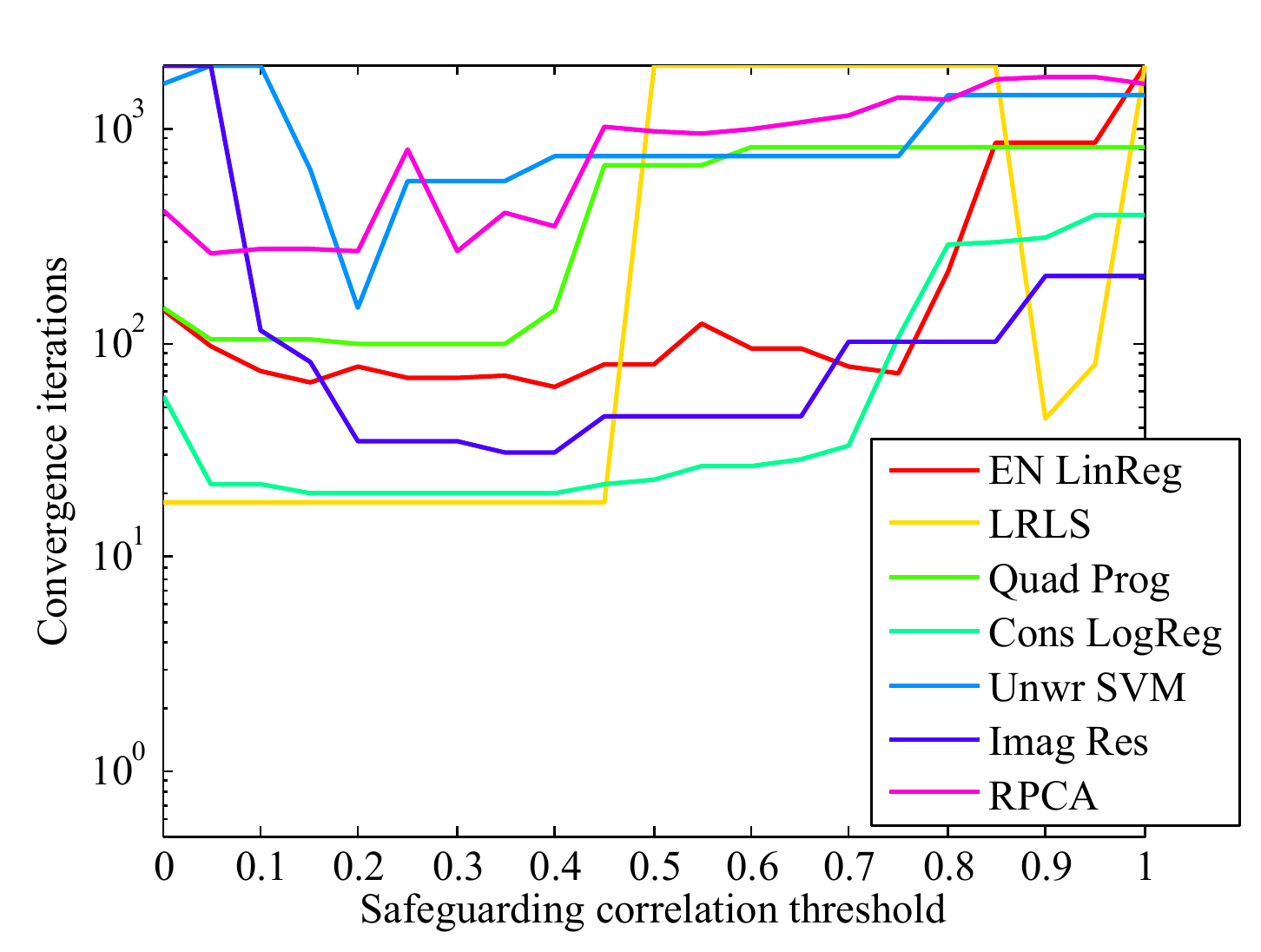}
}
\caption{\small Sensitivity of convergence speed to safeguarding threshold $\gep^{{\scriptsize \text{cor}}}$ for proposed ARADMM. Synthetic problems ('cameraman' for TVIR, and 'FaceSet1' for RPCA) of various applications are studied. Best viewed in color.
 }  %\vspace{-2mm}
\label{fig:corr}
\vspace{1mm}
\end{figure}

\section{Experiments} \label{sec:exp}

The proposed AADMM is implemented as shown in Algorithm~\ref{alg}.
 We also implemented vanilla ADMM, (non-adaptive) relaxed ADMM, ADMM with residual balancing (RB), and adaptive ADMM (AADMM) for comparison.

 The relaxation parameter for the non-adaptive relaxed ADMM is fixed at $\ggm\itk \! = \! 1.5$ as suggested in \cite{eckstein1992douglas}. The parameters of RB and AADMM are selected as in \cite{he2000alternating,boyd2011admm,xu2016adaptive}.
The initial penalty $\tau_0 \! = \! \nicefrac{1}{10}$ and initial relaxation $\ggm_0 \! =\! 1$ are used for all problems except the canonical QP problem, where initial parameters are set to the geometric mean of the maximum and minimum eigenvalues of matrix $Q$, as proposed for quadratic problems in~\cite{raghunathan2014alternating}. 
%We also set $\gep^{\mbox{\scriptsize cor}}=0.2$ and $T_{f}=2$ for all runs of AADMM and ARADMM.

For each problem, the same randomly generated  initial variables  $v_0, \gl_0$ are used for ADMM and its variant methods. As suggested by \cite{he2000alternating,xu2016adaptive}, the adaptivity of RB and AADMM is stopped after 1000 iterations to guarantee convergence.

\subsection{Convergence results}
Table~\ref{tab:exp} reports the convergence speed of ADMM and its variants for the applications described in \cref{sec:app}. More experimental results including the table of more test cases, the convergence curves, and visual results of image restoration and robust PCA for face decomposition are provided in the supplementary material.
Relaxed ADMM often outperforms vanilla ADMM, but does not compete with adaptive methods like RB, AADMM and ARADMM. The proposed ARADMM performs best in all the test cases.

\subsection{Sensitivity to initialization} \label{sec:sensitivity}
%Finally,
We study the sensitivity of the different ADMM variants to the initial penalty ($\tau_0$) and initial relaxation parameter ($\ggm_0$). Fig.~\ref{fig:en} presents iteration counts for a wide range of values of $\tau_0, \ggm_0$, for elastic net regression with synthetic datasets. In the left and center plots we fix one of $\tau_0, \ggm_0$ and vary the other.  The number of iterations needed to convergence is plotted as the algorithm parameters vary. In the right plot, we use a grid search to find the {\em optimal} $\tau_0$ for different values of $\ggm_0$.  Fig.~\ref{fig:en} (left) shows that adaptive methods are relatively stable with respect to the initial penalty $\tau_0$, while ARADMM outperforms RB and AADMM in all choices of initial $\tau_0$.  Fig.~\ref{fig:en} (middle) suggests that the relaxation $\ggm_0$ is generally less important than $\tau_0$. When a bad value of $\tau$ is chosen, it is unlikely that a good choice of~$\ggm$ can compensate. The proposed ARADMM that jointly adjusts $\tau,\ggm$ is generally better than simply adding the relaxation to the existing adaptive methods RB and AADMM.

Fig.~\ref{fig:en} (right) shows the sensitivity to $\ggm$ when using a grid search to choose the optimal $\tau_0$.  This optimal $\tau_0$ significantly improves the performance of vanilla ADMM and relaxed ADMM (which use the same $\tau_0$ for all iterations). Even when using the optimal stepsize for the non-adaptive methods, ARADMM is superior to or competitive with the non-adaptive methods.  Note that this experiment is meant to show a best-case scenario for the non-adaptive methods;  in practice the user generally has no knowledge of the optimal value of $\tau.$ Adaptive methods achieve optimal or near-optimal performance without an expensive grid search.

\subsection{Sensitivity to safeguarding} 
Finally, \cref{fig:corr} presents iteration counts when applying ARADMM with various safeguarding correlation thresholds $\gep^{{\scriptsize \text{cor}}}$.  When $\gep^{{\scriptsize \text{cor}}} =0$, the calculated adaptive parameters based on curvature estimations are always accepted, and when $\gep^{{\scriptsize \text{cor}}} \! =\! 1$ the parameters are never changed.  The proposed AADMM method is insensitive to $\gep^{{\scriptsize \text{cor}}}$ and performs well for a wide range of $\gep^{{\scriptsize \text{cor}}} \in [0.1, \, 0.4]$ for various applications, except for unwrapping SVM and RPCA. Though tuning such ``hyper-parameters'' may improve the performance of ARADMM for some applications, the fixed $\gep^{{\scriptsize \text{cor}}} = 0.2$ performs well in all our experiments (seven applications and over fifty test cases, a full list is in the supplementary material). The proposed ARADMM is fully automated and performs well without parameter tuning.

%%%%%%%%%%%    CONCLUSION

\section{Conclusion}
We have proposed an adaptive method for jointly  tuning the penalty and relaxation parameters of relaxed ADMM without user oversight. We have analyzed adaptive relaxed ADMM schemes, and provided conditions for which convergence is guaranteed.  Experiments on a wide range of machine learning, computer vision, and image processing benchmarks have demonstrated that the proposed adaptive method (often significantly) outperforms other ADMM variants without 
user oversight or 
parameter tuning.  The new adaptive method improves the applicability of relaxed ADMM by facilitating fully automated solvers that exhibit fast convergence and are usable by non-expert users. 
%\zheng{more words on applications and practical impact}

%We have proposed \textit{adaptive ADMM} (AADMM), a new variant of the very popular ADMM algorithm that tackles one of its fundamental drawbacks:  critical dependence on a penalty parameter that needs careful tuning. This drawback has made ADMM difficult to use by non-experts, thus AADMM has the potential to contribute to wider and easier applicability of this highly flexible and efficient optimization tool.  Our approach imports and adapts the Barzilai-Borwein ``spectral'' stepsize method from the smooth optimization literature, tailoring it to the more general class of problems handled by ADMM.  The cornerstone of our approach is the fact that ADMM is equivalent to Douglas-Rachford splitting (DRS) applied to the dual problem, for which we develop a spectral stepsize selection rule; this rule is then translated into a criterion to select the penalty parameter of ADMM. A safeguarding function that avoids unreliable stepsize choices finally yields  AADMM. Experiments on a wide range of problems and datasets have shown that AADMM outperforms other variants of ADMM. AADMM was also shown to be robust with respect to initial parameter choice and problem scaling.

\subsubsection*{Acknowledgments}
%{\small
TG and ZX were supported by the US Office of Naval Research under grant N00014-17-1-2078 and by the US National Science Foundation (NSF) under grant CCF-1535902. MF was partially supported by the Funda\c{c}\~{a}o para a
Ci\^{e}ncia e Tecnologia, grant UID/EEA/5008/2013. XY was supported by the General Research Fund from Hong Kong Research Grants Council under grant HKBU-12313516. CS was supported in part by Xilinx Inc., and by the US NSF under grants ECCS-1408006, CCF-1535897, and CAREER CCF-1652065.
%}
%\clearpage

\balance
{\small
\bibliographystyle{ieee}
\bibliography{admm,tensor,zheng}
}
\balance

\end{document}